\newtheorem{theorem}{Theorem}[section]
\newtheorem{corollary}{Corollary}[theorem]
\newtheorem{lemma}[theorem]{Lemma}
\newtheorem{proposition}[theorem]{Proposition}
\theoremstyle{remark}
\theoremstyle{definition}
\def\eqref#1{equation~\ref{#1}}
\def\1{\bm{1}}
\DeclareMathAlphabet{\mathsfit}{\encodingdefault}{\sfdefault}{m}{sl}
\SetMathAlphabet{\mathsfit}{bold}{\encodingdefault}{\sfdefault}{bx}{n}
\title{On Second Order Behaviour\\in Augmented Neural ODEs}
\author{%
  Alexander Norcliffe \\
  Department of Physics \\
  University of Cambridge \\
  \texttt{alex.norcliffe98@gmail.com}
   \And
  Cristian Bodnar\thanks{corresponding authors} , Ben Day\footnotemark[1] , Nikola Simidjievski, Pietro Li\`o \\
  Department of Computer Science and Technology \\
  University of Cambridge \\
  \texttt{\{cb2015, bjd39, ns779, pl219\}@cam.ac.uk}
}
\begin{document}

\maketitle

\begin{abstract}
Neural Ordinary Differential Equations (NODEs) are a new class of models that transform data continuously through infinite-depth architectures. The continuous nature of NODEs has made them particularly suitable for learning the dynamics of complex physical systems. While previous work has mostly been focused on first order ODEs, the dynamics of many systems, especially in classical physics, are governed by second order laws. In this work, we consider Second Order Neural ODEs (SONODEs). We show how the adjoint sensitivity method can be extended to SONODEs and prove that the optimisation of a first order coupled ODE is equivalent and computationally more efficient. Furthermore, we extend the theoretical understanding of the broader class of Augmented NODEs (ANODEs) by showing they can also learn higher order dynamics with a minimal number of augmented dimensions, but at the cost of interpretability. This indicates that the advantages of ANODEs go beyond the extra space offered by the augmented dimensions, as originally thought. Finally, we compare SONODEs and ANODEs on synthetic and real dynamical systems and demonstrate that the inductive biases of the former generally result in faster training and better performance.  
\end{abstract}

\section{Introduction}
Residual Networks (ResNets) \cite{he2015deep} have been an essential tool for scaling the capabilities of neural networks to extreme depths. It has been observed that the skip layers that these networks employ can be seen as an Euler discretisation of a continuous transformation \citep{lu2017finite, Haber_2017, ruthotto2019deep}. Neural Ordinary Differential Equations (NODEs) \citep{chen2018neural} are a new class of models that consider the limit of this discretisation step, naturally giving rise to an ODE that can be optimised via black-box ODE solvers. Their continuous depth makes them particularly suitable for learning and modelling the unknown dynamics of complex systems, which often cannot be described analytically. 

Since the introduction of NODEs, many variants have been proposed~\citep{jia2019neural, Tzen2019NeuralSD, dupont2019augmented, Zhang2019ANODEV2AC, yldz2019ode2vae, poli2019graph, massaroli2020dissecting}. While a few of these models use second order dynamics \citep{yldz2019ode2vae, poli2019graph, massaroli2020dissecting}, no in-depth study on second order behaviour in Neural ODEs exists even though most dynamical systems that arise in science, such as Newton's equations of motion and oscillators, are governed by second order laws. To fill this void, we take a deeper look at Second Order Neural ODEs (SONODEs) and the broader class of models formed by Augmented Neural ODEs (ANODEs). Unlike previous approaches, which mainly focus on classification tasks, we use low-dimensional physical systems, often with known analytic solutions, as our main arena of investigation. As we will show, the simplicity of these systems is useful in analysing the properties of these models.

To summarise our contributions, we begin by studying more closely the optimisation of SONODEs by generalising the adjoint sensitivity method to second order models. We continue by analysing how some of the properties of ANODEs extend to SONODEs and show that the latter can often find simpler solutions for the problems we consider. Our analysis also extends to ANODEs and demonstrates that they are capable of learning higher-order dynamics, sometimes with just a few additional dimensions. However, the way they do so has deeper implications for their functional loss landscape and their interpretability as a scientific tool. Finally, we compare SONODEs and ANODEs on real and synthetic second order dynamical systems. Our results reveal that the inductive biases in SONODEs are beneficial in this setting. Our code is available online at \url{https://github.com/a-norcliffe/sonode}.

\begin{figure}[t]
    \centering
    \includegraphics[width=\textwidth]{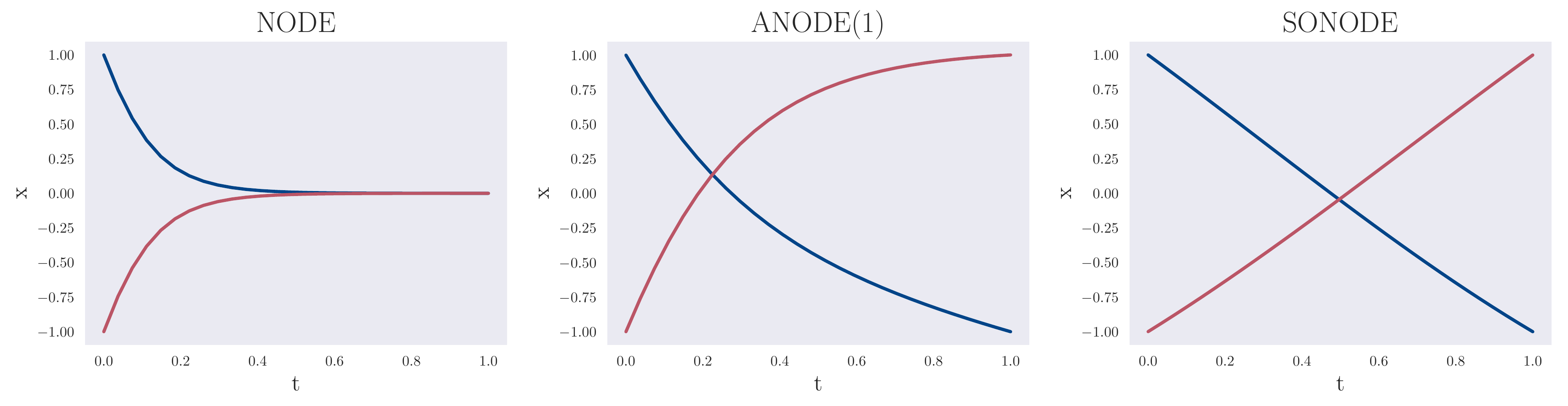}
    \vskip -0.1in
    \caption{Three learnt trajectories from the compact parity experiment ($g_{1d}$ from the original \citep{dupont2019augmented}). NODEs are not able to learn the mapping, ANODE(1) is able to learn it, SONODEs learn the simplest trajectory.}
    \label{fig: g1d}
\end{figure}

\section{Background}

As discussed in the introduction, Neural ODEs (NODEs) can be seen as a continuous variant of ResNet models \cite{he2015deep}, whose hidden state evolves continuously according to a differential equation
\begin{equation}
    \dot{\mathbf{x}} = f^{(v)}(\mathbf{x}, t, \theta_f), \qquad \mathbf{x}(t_0) = \mathbf{X}_0,
\end{equation}
whose velocity is described by a neural network $f^{(v)}$ with parameters $\theta_f$ and initial position given by the points of a dataset $\mathbf{X}_0$. As shown by \citet{chen2018neural}, the gradients can be computed through an abstract adjoint state $\mathbf{r}(t)$, once its dynamics are known.  

Our investigations are mainly focused on Augmented Neural ODEs (ANODEs)~\citep{dupont2019augmented}, which append states $\mathbf{a}(t)$ to the ODE: 
\begin{equation}
\label{eq:anode}
    \mathbf{z} = \begin{bmatrix}
           \mathbf{x} \\
           \mathbf{a} \\
         \end{bmatrix},
    \quad
    \dot{\mathbf{z}} = 
    f^{(v)}(\mathbf{z}, t, \theta_{f}),
    \quad
    \mathbf{z}(t_0)
         = \begin{bmatrix}
           \mathbf{X}_0 \\
           g(\mathbf{X}_0, \theta_{g}) \\
         \end{bmatrix}.
\end{equation}
We note that, unlike the original formulation, we allow for the initial values of the augmented dimensions $\mathbf{a}(t_0)$ to be learned as a function of $\mathbf{x}(t_0)$ by a neural network $g$ with parameters $\theta_g$. For the remainder of the paper, we use the ANODE($D$) notation to signify the use of $D$ augmented dimensions. 

We are almost exclusively concerned with the problem of learning and modelling the behaviour of dynamical systems, given $N+1$ sample points $\mathbf{X}_{t \in T}$, $t=(t_{0},\dots, t_{N}$), from a fixed set of its trajectories at multiple time steps included in the set $T$. For such tasks, we use the mean squared error (MSE) between these points and the corresponding predicted location over all time steps for training the models. For the few toy classification tasks we include, we optimise only for the linear separability of the final positions via the cross-entropy loss function.

\section{Second Order Neural Ordinary Differential Equations}
\label{sec: sonode}

We consider Second Order Neural ODEs (SONODEs), whose initial position $\mathbf{x}(t_{0})$, initial velocity $\dot{\mathbf{x}}(t_{0})$, and acceleration $\ddot{\mathbf{x}}$ are given by
\begin{equation}
\label{eq:sonode_acc_form}
\mathbf{x}(t_{0}) = \mathbf{X}_{0},
\qquad
\qquad
\dot{\mathbf{x}}(t_{0}) = g(\mathbf{x}(t_{0}), \theta_{g}),
\qquad
\qquad
\ddot{\mathbf{x}} = f^{(a)}(\mathbf{x}, \dot{\mathbf{x}}, t, \theta_f),
\end{equation}
where $f^{(a)}$ is a neural network with parameters $\theta_{f}$. Alternatively, SONODEs can be seen as a system of coupled first-order neural ODEs with state $\mathbf{z}(t) = [\mathbf{x}(t), \mathbf{a}(t)]$:
\begin{equation}
\label{eq:sonode_coupled}
    \mathbf{z} = \begin{bmatrix}
           \mathbf{x} \\
           \mathbf{a} \\
         \end{bmatrix},
    \quad
    \dot{\mathbf{z}} = 
    f^{(v)}(\mathbf{z}, t, \theta_{f})
    =
    \begin{bmatrix}
           \mathbf{a} \\
           f^{(a)}(\mathbf{x}, \mathbf{a}, t, \theta_{f}) \\
         \end{bmatrix},
    \quad
    \mathbf{z}(t_0)
         = \begin{bmatrix}
           \mathbf{X}_{0} \\
           g(\mathbf{X}_{0}, \theta_{g}) \\
         \end{bmatrix}.
\end{equation}

This formulation makes clear that SONODEs are a type of ANODE with constraints on the structure of $f^{(v)}$, and offers a way to reuse NODE's first order adjoint method \citep{chen2018neural} for training, as in previous work \citep{yldz2019ode2vae, massaroli2020dissecting}. However, a pair of questions remain about the optimisation of SONODEs: firstly, what is the ODE that the second order adjoint follows? And, consequently, how does the second order adjoint sensitivity method compare with first order adjoint-based optimisation? To address these questions, we show how the adjoint sensitivity method can be generalised to SONODEs.

\begin{proposition}
\label{prop: second_order_adjoint}
The adjoint state $\mathbf{r}(t)$ of SONODEs follows the second order ODE
\begin{equation}
\label{eqn: second_order_adjoint_maint_text_ode}
\begin{aligned}
    \ddot{\mathbf{r}} &= \mathbf{r}^{T}\frac{\partial f^{(a)}}{\partial \mathbf{x}}
    -\dot{\mathbf{r}}^{T}\frac{\partial f^{(a)}}{\partial \dot{\mathbf{x}}}
    -\mathbf{r}^{T}\frac{d}{dt}\Biggr(\frac{\partial f^{(a)}}{\partial \dot{\mathbf{x}}}\Biggr)
\end{aligned}
\end{equation}
\end{proposition}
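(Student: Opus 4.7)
The plan is to reduce the problem to the standard first-order adjoint result of \citet{chen2018neural} by exploiting the coupled first-order formulation given in equation (\ref{eq:sonode_coupled}), and then eliminate the auxiliary adjoint to obtain a single second-order ODE for $\mathbf{r}$.

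First I would apply the first-order adjoint equation to the stacked state $\mathbf{z} = [\mathbf{x}, \mathbf{a}]$. Writing the joint adjoint as $\mathbf{s} = [\mathbf{r}_x, \mathbf{r}_a]$, the Chen et al.\ result gives $\dot{\mathbf{s}}^T = -\mathbf{s}^T \frac{\partial f^{(v)}}{\partial \mathbf{z}}$. Because $f^{(v)}$ has the special block structure of (\ref{eq:sonode_coupled}), its Jacobian is
\begin{equation}
\frac{\partial f^{(v)}}{\partial \mathbf{z}} = \begin{bmatrix} \mathbf{0} & \mathbf{I} \\[2pt] \partial f^{(a)}/\partial \mathbf{x} & \partial f^{(a)}/\partial \dot{\mathbf{x}} \end{bmatrix},
\end{equation}
so the two scalar adjoint equations decouple into
\begin{equation}
\dot{\mathbf{r}}_x^T = -\mathbf{r}_a^T \frac{\partial f^{(a)}}{\partial \mathbf{x}}, \qquad \dot{\mathbf{r}}_a^T = -\mathbf{r}_x^T - \mathbf{r}_a^T \frac{\partial f^{(a)}}{\partial \dot{\mathbf{x}}}.
\end{equation}

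Next I would identify $\mathbf{r}$ in the proposition with the velocity adjoint $\mathbf{r}_a$ (this is the natural choice, since the target ODE features $\partial f^{(a)}/\partial \dot{\mathbf{x}}$ paired with $\dot{\mathbf{r}}$). From the second equation I can solve for $\mathbf{r}_x^T = -\dot{\mathbf{r}}^T - \mathbf{r}^T \partial f^{(a)}/\partial \dot{\mathbf{x}}$, differentiate once more with respect to $t$ (applying the product rule on $\mathbf{r}^T \partial f^{(a)}/\partial \dot{\mathbf{x}}$, which is what produces the total time derivative $\tfrac{d}{dt}(\partial f^{(a)}/\partial \dot{\mathbf{x}})$), and substitute the result into the first equation $\dot{\mathbf{r}}_x^T = -\mathbf{r}^T \partial f^{(a)}/\partial \mathbf{x}$. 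Rearranging yields exactly (\ref{eqn: second_order_adjoint_maint_text_ode}).

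The step most likely to cause trouble is bookkeeping rather than conceptual: one must be careful with row versus column conventions on $\mathbf{r}$, with the fact that $\partial f^{(a)}/\partial \dot{\mathbf{x}}$ depends on $t$ through both $\mathbf{x}(t)$ and $\dot{\mathbf{x}}(t)$ so the total derivative is not simply a partial derivative, and with the sign pattern coming from transposing the $2\times 2$ block Jacobian. I would also briefly remark that the correct boundary conditions for $\mathbf{r}$ and $\dot{\mathbf{r}}$ at $t_N$ can be read off from the analogous boundary conditions on $\mathbf{r}_x(t_N)$ and $\mathbf{r}_a(t_N)$ in the first-order adjoint method, so that (\ref{eqn: second_order_adjoint_maint_text_ode}) is a well-posed terminal value problem; the full details of the gradient-with-respect-to-parameters formula then follow by the same substitution applied to the parameter sensitivity integral, which I would defer to an appendix.
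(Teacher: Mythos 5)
Your derivation is correct, but it takes a genuinely different route from the paper's own proof of this proposition. The paper proves Proposition~\ref{prop: second_order_adjoint} from first principles with a Lagrange-multiplier construction: it introduces matrices $A(t)$, $B$, $C$ multiplying the residuals $(\ddot{\mathbf{x}}-f^{(a)})$, $(\dot{\mathbf{x}}(t_0)-g)$ and $(\mathbf{x}(t_0)-s)$, integrates by parts twice, chooses the dynamics and terminal conditions of $A(t)$ to annihilate the sensitivity terms, and handles the position- and velocity-dependence of the loss as two separate adjoints $\mathbf{r}^x$ and $\mathbf{r}^v$ that are then summed. You instead take the first-order adjoint of \citet{chen2018neural} as given, apply it to the coupled system (\ref{eq:sonode_coupled}), exploit the block Jacobian $\bigl[\begin{smallmatrix}\mathbf{0} & \mathbf{I}\\ \partial f^{(a)}/\partial\mathbf{x} & \partial f^{(a)}/\partial\dot{\mathbf{x}}\end{smallmatrix}\bigr]$, and eliminate the position adjoint $\mathbf{r}_x$; this is essentially the computation the paper performs later, in the proof of Proposition~\ref{prop: adjoints_are_equivalent}, where the same two decoupled equations for $\mathbf{r}^A$ and $\mathbf{r}^B$ appear and differentiating the second reproduces (\ref{eqn: compact_ode_second_order_adjoint}). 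Your identification $\mathbf{r}=\mathbf{r}_a$, the resulting boundary conditions at $t_N$, and the observation that the parameter gradient collapses to $-\int\mathbf{r}_a^T\,\partial f^{(a)}/\partial\theta_f\,dt$ because the top block of $f^{(v)}$ has no $\theta_f$-dependence are all sound. What your route buys is brevity and the near-automatic equivalence of the two optimisation schemes; what it gives up is the paper's stated motivation for the longer derivation, namely that the Chen et al.\ argument ``does not present an obvious way to extend the adjoint method to second order ODEs,'' whereas the variational construction generalises directly to $k$-th order constraints and also yields the gradients with respect to $\theta_g$ and $\theta_s$ without reference to the stacked system.
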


The proof and boundary conditions for this ODE is given in Appendix~\ref{app: adjoint_method}. As an additional contribution, we include an alternative proof to those of \citet{chen2018neural} and \citet{pontryagin2018mathematical} for the first order adjoint. Given that the dynamics of the abstract adjoint vector are known, its state at all times $t$ can be used to train the parameters $\theta_f$ using the integral
\begin{equation}
\label{eq:sonode_grad}
    \frac{dL}{d\theta_{f}} = -\int_{t_{n}}^{t_{0}}\mathbf{r}^{T}\frac{\partial f^{(a)}}{\partial \theta_{f}}dt,
\end{equation}
where $L$ denotes the loss function and $t_{n}$ is the timestamp of interest. The gradient with respect to the parameters of the initial velocity network, $\theta_g$, can be found in Appendix \ref{app: adjoint_method}. To answer the second question, we compare this gradient against that obtained through the adjoint of the first order coupled ODE from Equation (\ref{eq:sonode_coupled}). 

\begin{proposition}
\label{prop: adjoints_are_equivalent}
The gradient of $\theta_f$ computed through the adjoint of the coupled ODE from (\ref{eq:sonode_coupled}) and the gradient from (\ref{eq:sonode_grad}) are equivalent. However, the latter requires at least as many matrix multiplications as the former. 
\end{proposition}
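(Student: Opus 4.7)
The plan is to derive both parameter-gradient expressions explicitly, prove they agree by identifying the relevant component of the coupled adjoint with the second order adjoint $\mathbf{r}$, and then count the vector--Jacobian products each method requires per integration step.

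First I would write out the standard first order adjoint for the coupled system in Equation (\ref{eq:sonode_coupled}). Letting the adjoint be $\mathbf{s}(t) = [\mathbf{s}_x, \mathbf{s}_a]^\top$ partitioned like $\mathbf{z} = [\mathbf{x}, \mathbf{a}]^\top$, the Jacobian $\partial f^{(v)}/\partial \mathbf{z}$ has a zero top-left block, an identity top-right block, and bottom blocks $\partial f^{(a)}/\partial \mathbf{x}$ and $\partial f^{(a)}/\partial \mathbf{a}$. This yields $\dot{\mathbf{s}}_x = -\mathbf{s}_a^\top (\partial f^{(a)}/\partial \mathbf{x})$ and $\dot{\mathbf{s}}_a = -\mathbf{s}_x - \mathbf{s}_a^\top (\partial f^{(a)}/\partial \mathbf{a})$. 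Because the top block of $\partial f^{(v)}/\partial \theta_f$ vanishes, the coupled parameter gradient collapses to $-\int_{t_n}^{t_0} \mathbf{s}_a^\top (\partial f^{(a)}/\partial \theta_f)\,dt$, which matches Equation (\ref{eq:sonode_grad}) as soon as $\mathbf{s}_a \equiv \mathbf{r}$.

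To establish this identity I would solve the second coupled equation for $\mathbf{s}_x$, differentiate in $t$, and substitute back into the first; after using $\mathbf{a} = \dot{\mathbf{x}}$ (so that $\partial f^{(a)}/\partial \mathbf{a}$ becomes $\partial f^{(a)}/\partial \dot{\mathbf{x}}$), the result matches the second order adjoint ODE of Proposition \ref{prop: second_order_adjoint} term by term, so $\mathbf{s}_a$ satisfies the same ODE as $\mathbf{r}$. Uniqueness then gives $\mathbf{s}_a = \mathbf{r}$ provided the terminal conditions agree. For the cost comparison I would count matrix operations per integration step: the coupled adjoint needs only two vector--Jacobian products, against $\partial f^{(a)}/\partial \mathbf{x}$ and $\partial f^{(a)}/\partial \mathbf{a}$, with the identity block contributing just a trivial copy. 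The second order formulation recovers the same two products from the first two terms of Equation (\ref{eqn: second_order_adjoint_maint_text_ode}), but requires an additional contraction with $\tfrac{d}{dt}(\partial f^{(a)}/\partial \dot{\mathbf{x}})$, which is obtainable either by explicit time-differentiation of the Jacobian or by a Hessian--vector product, adding at least one further matrix multiplication per step.

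The main obstacle I anticipate is terminal-condition bookkeeping. Equation (\ref{eqn: second_order_adjoint_maint_text_ode}) is second order and therefore requires both $\mathbf{r}(t_n)$ and $\dot{\mathbf{r}}(t_n)$, whereas the coupled adjoint supplies only the single block vector $\mathbf{s}(t_n)$. To close the uniqueness argument I would need to verify that $\dot{\mathbf{s}}_a(t_n)$, obtained by plugging the coupled adjoint's terminal values into its own ODE, equals the $\dot{\mathbf{r}}(t_n)$ derived in the appendix referenced after Proposition \ref{prop: second_order_adjoint}; any mismatch there would invalidate the identification of $\mathbf{s}_a$ with $\mathbf{r}$ and hence the entire equivalence half of the proposition.
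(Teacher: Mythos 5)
Your proposal follows essentially the same route as the paper's proof: partition the coupled adjoint into position and velocity blocks, eliminate the position block to recover the second order adjoint ODE of Proposition~\ref{prop: second_order_adjoint}, and attribute the extra cost of the second order formulation to the additional contraction with $\frac{d}{dt}\bigl(\partial f^{(a)}/\partial\dot{\mathbf{x}}\bigr)$. The terminal-condition check you flag does go through as you anticipate: substituting the coupled adjoint's terminal values into its own ODE at $t_n$ reproduces exactly the $\dot{\mathbf{r}}(t_n)$ derived in Appendix~\ref{app: adjoint_method}, so the identification of the velocity block with $\mathbf{r}$ is valid.
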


This result motivates the use of the first order coupled ODE as it presents computational advantages. The proof in Appendix~\ref{app: adjoint_method} shows that this is due to the dynamics of the adjoint from the coupled ODE, which contain entangled representations of the adjoint. This is in contrast to the disentangled representation in Equation (\ref{eqn: second_order_adjoint_maint_text_ode}), where the adjoint state and velocity are separated. It is the entangled representation that permits the faster computation of the gradients for the coupled ODE. We will see in Section \ref{sec: anodes_interpretability} that entangled representations in ANODEs are a reoccurring phenomenon, and their effects are not always beneficial, as in this case. We use the first order ODE optimisation for the remainder of our experiments.

\section{Properties of SONODEs}
\label{sec: node_problems}

In this section, we analyse certain properties of SONODEs and illustrate them with toy examples.

\subsection{Generalised parity problem}

It is known that unique trajectories in NODEs cannot cross at the same time \citep{dupont2019augmented, massaroli2020dissecting}. We extend this to higher order Initial Value Problems (IVP). Proofs are presented in Appendix \ref{app: ode_proofs}.

\begin{proposition}
\label{prop: no_crossing_different}
For a k-th order IVP, if the k-th derivative of $\mathbf{x}$ is Lipschitz continuous and has no explicit time dependence, then unique phase space trajectories cannot intersect at an angle. Similarly, a single phase space trajectory cannot intersect itself at an angle.
\end{proposition}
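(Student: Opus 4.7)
The plan is to reduce the $k$-th order initial value problem to a first-order autonomous system on the phase space $\mathbb{R}^{kd}$ and then apply the Picard--Lindel\"of uniqueness theorem. First I would define the phase-space coordinate $\mathbf{y}(t) = (\mathbf{x}(t), \dot{\mathbf{x}}(t), \ldots, \mathbf{x}^{(k-1)}(t))$, so that the IVP $\mathbf{x}^{(k)} = f(\mathbf{x}, \dot{\mathbf{x}}, \ldots, \mathbf{x}^{(k-1)})$ becomes $\dot{\mathbf{y}} = F(\mathbf{y})$, where $F(\mathbf{y}) = (y_2, y_3, \ldots, y_k, f(y_1, \ldots, y_k))$. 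Since $f$ has no explicit time dependence, neither does $F$; and since the first $k-1$ block components of $F$ are coordinate projections (which are $1$-Lipschitz) and the last block is Lipschitz by hypothesis, $F$ is Lipschitz on $\mathbb{R}^{kd}$.

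Next I would invoke Picard--Lindel\"of on the autonomous system $\dot{\mathbf{y}} = F(\mathbf{y})$, which gives existence and uniqueness of a maximal integral curve through every point. The key consequence is that the tangent vector of any trajectory at a phase-space point $\mathbf{y}^{*}$ is determined purely by the position: it must equal $F(\mathbf{y}^{*})$.

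The two claims then follow almost immediately. For the first, suppose distinct trajectories $\mathbf{y}_{1}$ and $\mathbf{y}_{2}$ satisfy $\mathbf{y}_{1}(t_{1}^{*}) = \mathbf{y}_{2}(t_{2}^{*}) = \mathbf{y}^{*}$. Then
\begin{equation*}
    \dot{\mathbf{y}}_{1}(t_{1}^{*}) \;=\; F(\mathbf{y}^{*}) \;=\; \dot{\mathbf{y}}_{2}(t_{2}^{*}),
\end{equation*}
so the tangent directions at the intersection coincide and the curves meet tangentially rather than at an angle. For the self-intersection case I would apply the same identity with $\mathbf{y}_{1} = \mathbf{y}_{2}$ and $t_{1}^{*} \neq t_{2}^{*}$, obtaining equal tangents at the repeated point (and, as a byproduct, that the trajectory is periodic with period $t_{2}^{*} - t_{1}^{*}$).

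The main obstacle is the equilibrium edge case $F(\mathbf{y}^{*}) = \vzero$, where ``angle'' is not well-defined because the tangent vanishes. I would handle this by noting that uniqueness forces the integral curve through such a $\mathbf{y}^{*}$ to be the constant curve $\mathbf{y}(t) \equiv \mathbf{y}^{*}$, so any trajectory that actually attains $\mathbf{y}^{*}$ at a finite time must equal it identically; non-equilibrium trajectories can only approach $\mathbf{y}^{*}$ asymptotically and never reach it, so no genuine non-tangential crossing occurs. Beyond this subtlety, the argument is a direct application of uniqueness and should be stated cleanly in Appendix~\ref{app: ode_proofs}.
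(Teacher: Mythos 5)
Your proof is correct, and it reaches the same essential insight as the paper's --- at a shared phase-space point two trajectories must have the same derivative, so they cannot separate ``at an angle'' --- but it gets there by a different route. The paper first proves a standalone lemma (that Lipschitz continuity of the $k$-th derivative rules out kinks in the phase-space trajectory) and then argues by contradiction: it evolves both states forward by a small $\delta t$ and observes that crossing at an angle would force $\mathbf{z}_{1}(t_{1}+\delta t) \neq \mathbf{z}_{2}(t_{2}+\delta t)$ while equality of all phase-space coordinates forces $\mathbf{z}_{1}(t_{1}+\delta t) = \mathbf{z}_{2}(t_{2}+\delta t)$. Your version replaces that informal small-time argument with the standard reduction to a first-order autonomous system $\dot{\mathbf{y}} = F(\mathbf{y})$ and reads the tangent identity $\dot{\mathbf{y}}(t^{*}) = F(\mathbf{y}^{*})$ directly off the ODE, which is cleaner and makes the ``no kinks'' lemma unnecessary. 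Two small remarks: the tangent identity is a consequence of autonomy alone, not of Picard--Lindel\"of (uniqueness is only genuinely needed for your periodicity byproduct and the equilibrium case), and your treatment of the equilibrium point $F(\mathbf{y}^{*}) = \vzero$, where ``angle'' is undefined, covers an edge case the paper's proof silently skips. The self-intersection reduction to the two-trajectory case matches the paper's.
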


While this shows SONODE trajectories cannot cross in phase space, they can cross in real space if they have different velocities. To illustrate this, we introduce a generalised parity problem, an extension to $D$ dimensions of the $g_{1d}$ function from \citet{dupont2019augmented}, which maps $\mathbf{x} \to -\mathbf{x}$. We remark that SONODEs should be able to learn a parity flip in any number of dimensions, with a trivial solution
\begin{equation}
\label{eqn: sonode_parity_solution}
f^{(a)}(\mathbf{x}, \dot{\mathbf{x}}, t, \theta_{f}) = 0,
\qquad
\qquad
g(\mathbf{x}(t_{0}), \theta_{g}) = -\frac{2}{t_{N}-t_{0}}\mathbf{x}(t_{0})
\end{equation}
This is equivalent to all points moving in straight lines through the origin to $-\mathbf{x}(t_{0})$. We first visualise the learnt transformation in the one dimensional case (Figure \ref{fig: g1d}), for points initialised at $\pm1$. SONODEs learn the simplest trajectories for this problem.

\begin{wrapfigure}{li}{0.4\textwidth}
    \centering
    \includegraphics[width=0.4\textwidth]{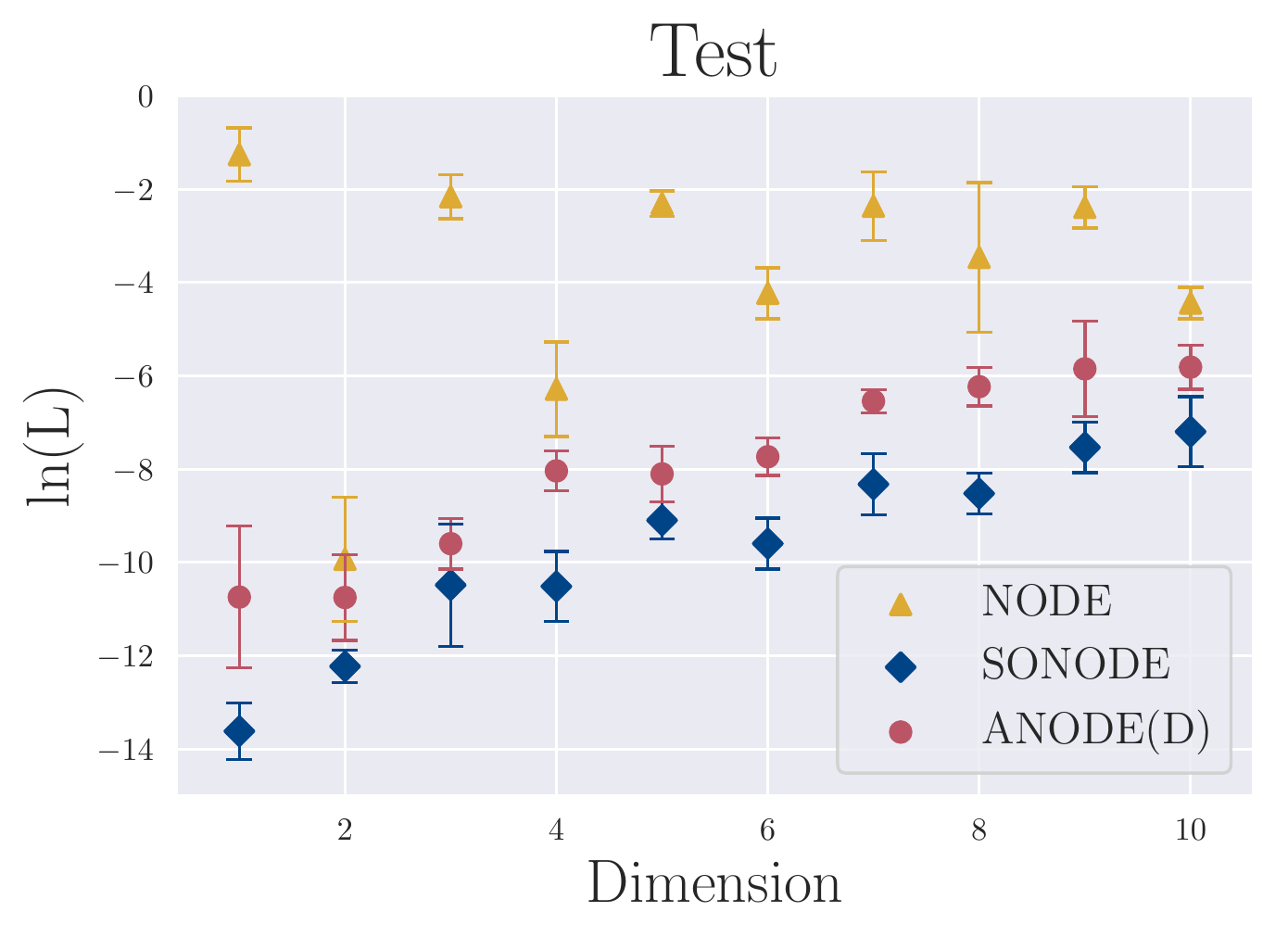}
    \vspace{-10pt}
    \caption{The logarithm of the loss in each dimension for the generalised parity problem. SONODE has the lowest loss, while the NODE loss generally oscillates between dimensions as predicted.}
    \label{fig: parity_multi_dimensions}
\end{wrapfigure}

For higher dimensions, we first remark that NODEs are able to produce parity flips for even dimensions by pairing off the dimensions and performing a $180^{\circ}$ rotation in each pair. This solution does 
not apply to odd-dimensional cases because there is always an unpaired dimension that is not rotated.
In addition to the dimensional-parity effect, as volume increases exponentially with the dimensionality, the density exponentially decreases (given the number of points in the dataset remains constant). This makes it easier to manipulate the points without trajectories crossing, and so, it is expected that the problem will become easier for NODEs as dimensionality increases.

In Figure \ref{fig: parity_multi_dimensions}, we investigate parity flips in higher dimensions, using 50 training points and 10 test points, randomly generated between -1 and 1 in each dimension. For NODEs, as predicted, the loss oscillates over dimensions and, for odd dimensions, the loss decreases with the number of dimensions. ANODEs perform better than NODES, especially in odd dimensions, where it can rotate the unpaired dimension through the additional space. SONODEs have the lowest loss in every generalisation, which can be associated with the existence of the trivial solution in any number of dimensions, given by Equation (\ref{eqn: sonode_parity_solution}).

\subsubsection{Nested n-spheres}
\begin{figure}[b]
    \centering
    \includegraphics[width=0.9\textwidth]{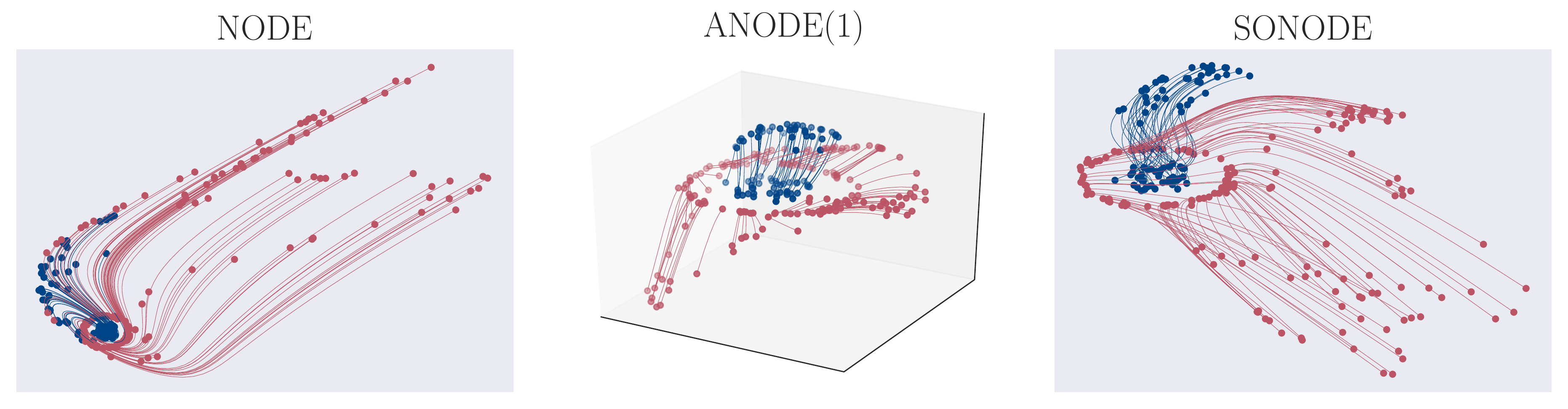}
    \caption{The trajectories learnt by NODEs, ANODEs and SONODEs for the nested-n-spheres problem in 2D. NODEs preserves the topology so the blue region cannot escape the red region. ANODEs, as expected, use the third dimension to separate the two regions. For SONODEs the points pass through each other in real space.}
    \label{fig: g}
\end{figure}

\citet{dupont2019augmented} prove that a transformation under NODEs has to be a homeomorphism, preserving the topology of the input space, and as such, they cannot learn certain transformations. Similarly to ANODEs, SONODEs avoid this problem.

\begin{proposition}
SONODEs are not restricted to homeomorphic transformations in real space.
\end{proposition}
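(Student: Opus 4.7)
The plan is to establish the proposition by exhibiting an explicit SONODE whose induced real-space map
\[
T(\mathbf{X}_0) \;=\; \pi_{\mathbf{x}} \circ \Phi_{t_0 \to t_N}\bigl(\mathbf{X}_0,\, g(\mathbf{X}_0, \theta_g)\bigr)
\]
fails to be a homeomorphism, where $\Phi_{t_0 \to t_N}$ is the phase-space flow sending $(\mathbf{x}, \dot{\mathbf{x}})(t_0)$ to $(\mathbf{x}, \dot{\mathbf{x}})(t_N)$ and $\pi_{\mathbf{x}}$ is projection onto the position coordinate. The conceptual point I would stress first is that $\Phi_{t_0 \to t_N}$ is itself a homeomorphism of $\mathbb{R}^{2d}$ whenever $f^{(a)}$ is Lipschitz (this is what underlies Proposition \ref{prop: no_crossing_different}), but the two extra operations --- restricting to the graph of the velocity-encoding network $g$, and then projecting away the velocity component --- need not preserve the homeomorphism property. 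This is precisely the gap in which SONODEs gain expressiveness over NODEs.

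For the concrete construction I would take $f^{(a)} \equiv 0$ and $g(\mathbf{X}_0, \theta_g) = -\mathbf{X}_0/(t_N - t_0)$, both trivially realisable as neural networks (a zero-output MLP and a single linear layer respectively). The resulting trajectories are the straight lines $\mathbf{x}(t) = \mathbf{X}_0\bigl(1 - (t - t_0)/(t_N - t_0)\bigr)$, so $T$ collapses every initial point to the origin at $t = t_N$. The induced real-space map is therefore constant, hence not injective, and so not a homeomorphism; this single example suffices to prove the claim.

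Finally, I would tie the construction back to Proposition \ref{prop: no_crossing_different} and Figures \ref{fig: g1d} and \ref{fig: g} to head off an apparent tension. In the construction above, distinct initial positions $\mathbf{X}_0$ give distinct initial velocities, so the corresponding phase-space trajectories remain disjoint even though their projections to real space all collide at $t_N$ --- the same crossings-in-real-space-but-not-in-phase-space mechanism exhibited by the parity solution in Equation (\ref{eqn: sonode_parity_solution}) and by the nested-$n$-spheres trajectories in Figure \ref{fig: g}. There is essentially no technical obstacle in this proof; the only subtlety worth flagging is the careful distinction between the phase-space flow (always a homeomorphism) and the real-space map (under no such constraint).
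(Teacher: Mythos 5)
Your proposal is correct and is essentially the paper's own argument: the paper likewise takes $\ddot{\mathbf{x}} = 0$ with an affine initial-velocity network ($v_0 = -x_0 + 2$ over $[0,1]$) so that distinct initial positions $0$ and $1$ both land at $2$, i.e.\ the real-space endpoint map is constant and hence not injective. Your version (collapsing everything to the origin via $g(\mathbf{X}_0) = -\mathbf{X}_0/(t_N - t_0)$) and the surrounding discussion of the phase-space flow versus its projection are a slightly cleaner packaging of the same counterexample, but not a different route.
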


The proof can be found in Appendix \ref{app: sonode_not_homeomorphism}. To illustrate this, we perform an experiment on the nested n-spheres problem \citep{dupont2019augmented}, (the name is taken from \cite{massaroli2020dissecting}, originally called $g$ function \cite{dupont2019augmented}), where the elements of the blue class are surrounded by the elements of the red class (Figure \ref{fig: g}) such that a homeomorphic transformation in that space cannot linearly separate the two classes. As expected, only ANODEs and SONODEs can learn a mapping.

\section{Second order behaviour in SONODEs and ANODEs}
\label{sec: anodes_learn_2nd_order}

Previously, the benefits of ANODEs were attributed only to the extra space they have in which to move \citep{dupont2019augmented}. However, in this section, we show that coupled first order ODEs, such as ANODEs, are also able to represent higher-order order behaviour. Additionally, we study the functional forms ANODEs can use to learn this. Unless stated we consider ANODEs in their original formulation where $\mathbf{a}(t_{0})=0$.

\subsection{How do ANODEs learn second order dynamics?}
\label{sec: how_anodes_learn_2nd}
Consider a SONODE as in Equation (\ref{eq:sonode_acc_form}). Similarly to the coupled ODE from Equation (\ref{eq:sonode_coupled}), ANODEs can represent this if the state, $\mathbf{z} = [\mathbf{x},\mathbf{a}]$, is augmented such that $\mathbf{a}$ has the same dimensionality as $\mathbf{x}$:
\begin{equation}
\label{eqn: anode_learn_2nd_order}
\mathbf{z}(t_{0}) = \begin{bmatrix}
    \mathbf{x}(t_{0})\\
    0\\
\end{bmatrix},
\quad
\dot{\mathbf{z}}
=
\begin{bmatrix}
    \mathbf{a} + \dot{\mathbf{x}}(t_{0})\\
    f^{(a)}(\mathbf{x}, \dot{\mathbf{x}}, t, \theta_{f})\\
\end{bmatrix}
=
\begin{bmatrix}
    \mathbf{a} + g(\mathbf{x}(t_{0}), \theta_{g})\\
    f^{(a)}(\mathbf{x}, \mathbf{a}+g(\mathbf{x}(t_{0}),\theta_{g}), t, \theta_{f})\\
\end{bmatrix},
\end{equation}
where $\mathbf{a}$ differentiates to the acceleration and, because $\mathbf{a}(t_{0})=0$, the initial velocity is added to it to obtain the correct dynamics. Generalising this, it is clear to see how ANODEs can also learn $k$-th order ODEs, by splitting up the augmented part $\mathbf{a}$ into $k-1$ vectors with the same dimensionality as $\mathbf{x}$. However, if ANODEs were to learn higher order dynamics this way, $\mathbf{x}(t_{0})$ is required as an input, just as in data-controlled neural ODEs \cite{massaroli2020dissecting}. To show this is not usually the case, we let ANODE(1) learn two 1D functions at the same time with a shared ODE, using the same set of parameters, but different initial conditions. Specifically, we consider two damped harmonic oscillators
\begin{equation}
x_{1}(t) = e^{-\gamma t}\sin(\omega t),
\qquad
\qquad
x_{2}(t) = e^{-\gamma t}\cos(\omega t)
\end{equation}
where $\gamma$ can be zero so that there is no decay. 

SONODEs can learn these using the functional form
\begin{equation}
\label{eqn: sonode_double_func}
f^{(a)}(x, \dot{x}, t, \theta_{f}) = -(\omega^{2} +\gamma^{2})x - 2\gamma\dot{x},
\qquad
\qquad
g(x(0),\theta_{g}) = -(\omega+\gamma)x(0) + \omega
\end{equation}

\begin{wrapfigure}{li}{0.4\textwidth}
    \begin{center}
    \vspace{-15pt}
    \includegraphics[width=0.4\textwidth]
    {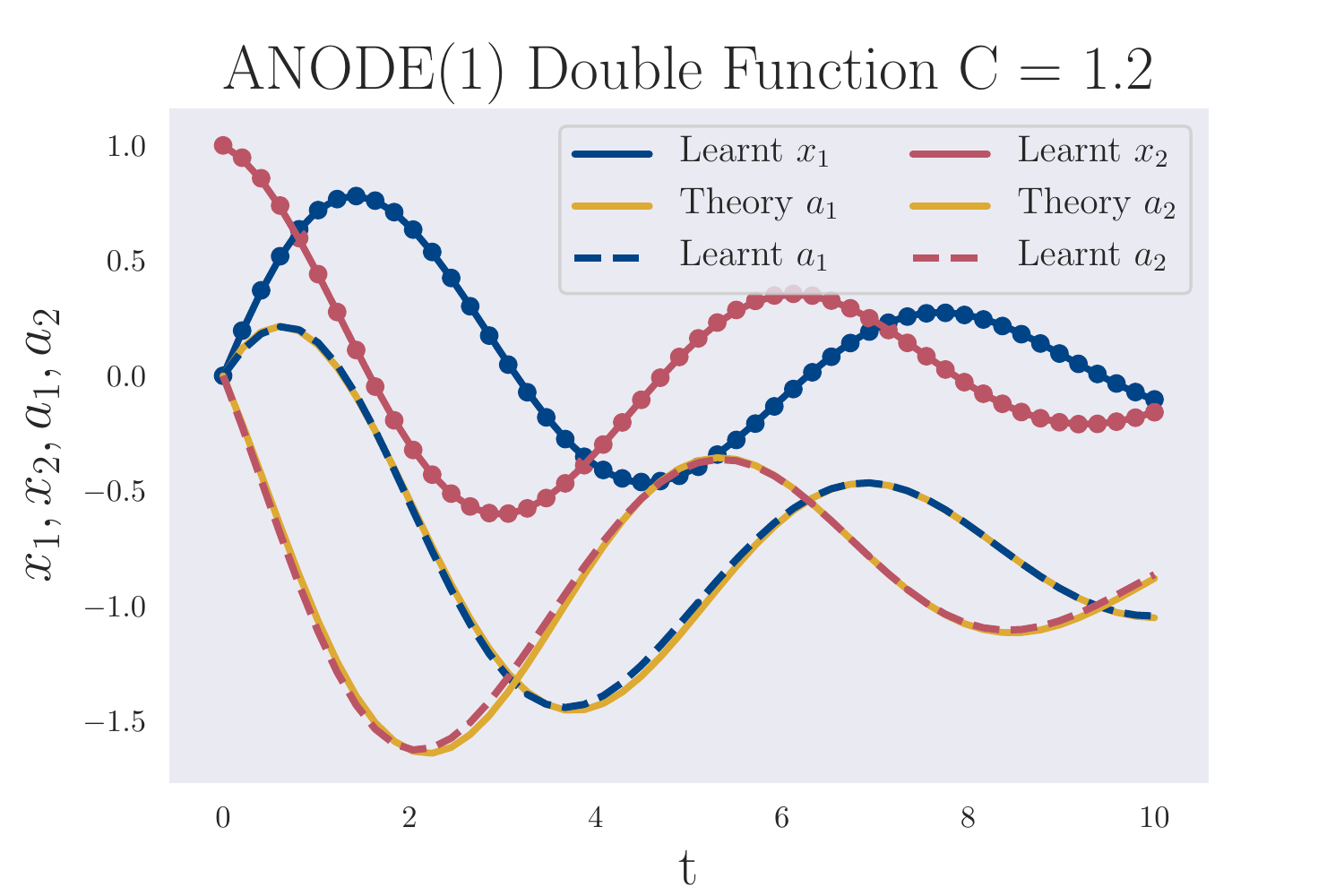}
    \vspace{-10pt}
    \caption{ANODE(1) learning two functions using the same parameters, for $\omega=1$ and $\gamma=0.1667$. The real trajectories are going through their sampled data points. Augmented trajectories are plotted over their theoretical trajectories given by Equation (\ref{eqn: anode_learning_double_func}) for $C=1.2$.}
    \vspace{-30pt}
    \label{fig: anodes_double_func}
    \end{center}
\end{wrapfigure}

It is not immediately obvious how ANODEs could solve this, especially if they follow Equation (\ref{eqn: anode_learn_2nd_order}), where $x(t_{0})$ is needed as an input to determine $\dot{x}(t_{0})$. However, Figure \ref{fig: anodes_double_func} shows that ANODEs are able to fit the two functions in the same training session. We observe that ANODEs approximate a solution of the form:
\begin{equation}
\label{eqn: anode_learning_double_func}
\begin{bmatrix}
\dot{x}\\
\dot{a}\\
\end{bmatrix}
=
\begin{bmatrix}
C a -\omega x - \gamma x + \omega\\
\omega a - \gamma a  -\frac{1}{C}(2\omega^{2}x+\gamma\omega - \omega^{2})\\
\end{bmatrix}
\end{equation}
Using $a(0)=0$, this gives the correct ODE and initial conditions in Equation (\ref{eqn: sonode_double_func}), for all finite, non-zero $C$.

We remark that the state $x$ and the augmented dimension $a$ are entangled in the velocity of the state and $\dot{x} \neq a$. This example gives an intuition about the way ANODEs can learn second order behaviour through an ODE as in Equation (\ref{eqn: anode_learning_double_func}). We now formalise this intuition and give a general expression: \newline

\begin{proposition}
\label{prop: anode_general_form}
The general form ANODEs learn second order behaviour is given by:
\begin{equation}
\begin{bmatrix}
\dot{\mathbf{x}}\\
\dot{\mathbf{a}}\\
\end{bmatrix}
=
\begin{bmatrix}
F(\mathbf{x},\mathbf{a},t,\theta_{F})\\
G(\mathbf{x},\mathbf{a},t,\theta_{G})\\
\end{bmatrix},
\qquad
G = \left( \frac{\partial F}{\partial \mathbf{a}^{T}}\right)_{\text{left}}^{-1}
\left(
f^{(a)} - \frac{\partial F}{\partial \mathbf{x}^{T}}F - 
\frac{\partial F}{\partial t}
\right)
\end{equation}
\end{proposition}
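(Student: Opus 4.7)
The plan is to derive the stated form for $G$ directly from the consistency requirement that the augmented first-order system reproduces the target second-order dynamics $\ddot{\mathbf{x}} = f^{(a)}(\mathbf{x},\dot{\mathbf{x}},t)$. Given the ANODE
\[
\dot{\mathbf{x}} = F(\mathbf{x},\mathbf{a},t,\theta_F), \qquad \dot{\mathbf{a}} = G(\mathbf{x},\mathbf{a},t,\theta_G),
\]
the first step is to differentiate the top row in time. Applying the chain rule through the explicit dependencies of $F$ on $\mathbf{x}, \mathbf{a}, t$ gives
\[
\ddot{\mathbf{x}} = \frac{\partial F}{\partial \mathbf{x}^T}\dot{\mathbf{x}} + \frac{\partial F}{\partial \mathbf{a}^T}\dot{\mathbf{a}} + \frac{\partial F}{\partial t}.
\]

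Next I would substitute the ANODE velocities $\dot{\mathbf{x}} = F$ and $\dot{\mathbf{a}} = G$, and impose the target $\ddot{\mathbf{x}} = f^{(a)}(\mathbf{x}, F, t)$ (using that the observed velocity is $\dot{\mathbf{x}} = F$). This yields the linear equation
\[
\frac{\partial F}{\partial \mathbf{a}^T}\, G \;=\; f^{(a)} - \frac{\partial F}{\partial \mathbf{x}^T} F - \frac{\partial F}{\partial t},
\]
which, once solved for $G$, gives exactly the stated formula. As a sanity check I would recover the worked example by plugging in $F(x,a,t) = Ca - \omega x - \gamma x + \omega$, verifying that the solution for $G$ matches the second row of Equation (\ref{eqn: anode_learning_double_func}).

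The main technical subtlety — and the only nontrivial part of the argument — is the inversion of the Jacobian $\partial F/\partial \mathbf{a}^T$. When $\dim(\mathbf{a}) \neq \dim(\mathbf{x})$ this matrix is rectangular, so a genuine inverse need not exist; this is why the statement uses a left inverse. I would briefly note that a left inverse exists precisely when $\partial F/\partial \mathbf{a}^T$ has full column rank, which in particular requires $\dim(\mathbf{a}) \geq \dim(\mathbf{x})$, consistent with the earlier observation that one needs at least as many augmented dimensions as state dimensions to capture second-order behaviour. When the left inverse is not unique, the resulting $G$ is determined only up to the null space of $\partial F/\partial \mathbf{a}^T$, which corresponds to ANODE redundancies that do not affect the evolution of $\mathbf{x}$; I would mention this as a remark but would not need it for the main conclusion.
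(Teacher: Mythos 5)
Your main argument is exactly the paper's proof: differentiate the top row $\dot{\mathbf{x}}=F$ in time via the chain rule, substitute $\dot{\mathbf{x}}=F$ and $\dot{\mathbf{a}}=G$, impose $\ddot{\mathbf{x}}=f^{(a)}(\mathbf{x},F,t)$, and solve the resulting linear system for $G$ with a left inverse. That part is correct and needs no change.

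One side remark is wrong, and you should fix or drop it before including it. The Jacobian $\partial F/\partial \mathbf{a}^{T}$ has $\dim(\mathbf{x})$ rows and $\dim(\mathbf{a})$ columns, so full column rank --- the condition for a left inverse to exist --- requires $\dim(\mathbf{x}) \geq \dim(\mathbf{a})$, not $\dim(\mathbf{a}) \geq \dim(\mathbf{x})$ as you state. Your claimed conclusion also contradicts the paper's own ``minimal augmentation'' corollary, which is precisely about the overdetermined case $\dim(\mathbf{a}) < \dim(\mathbf{x})$, where the Moore--Penrose left pseudo-inverse $(A^{T}A)^{-1}A^{T}$ either recovers an exact solution (if one exists) or gives the least-squares approximation. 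Relatedly, your comment that $G$ is ``determined up to the null space of $\partial F/\partial \mathbf{a}^{T}$'' is muddled: if that matrix has a nontrivial null space it fails to have full column rank and no left inverse exists at all; when a left inverse does exist, $AG=b$ has at most one solution, and the ambiguity among left inverses only matters when $b$ lies outside the column space.
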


This result is derived in Appendix \ref{app: anode_learn_second_order}. It shows that SONODEs and ANODEs learn second order dynamics in different ways. ANODEs learn an abstract function $F$ that at $t_0$ is equal to the initial velocity, and another function $G$ that couples to $F$ giving it the right acceleration. In contrast, SONODEs are constrained to learn the acceleration and initial velocity directly. This also leads to several useful properties that we investigate next. 

\subsection{Minimal augmentation}

The first property we analyse is called \textit{minimal augmentation}. It refers to the fact that ANODEs can learn second order dynamics even when the number of extra dimensions is less than the dimensionality of the real space.

\begin{corollary}
When the system from Proposition \ref{prop: anode_general_form} is overdetermined (i.e. $\text{dim}(\mathbf{a}) < \text{dim}(\mathbf{x})$) and has a solution, the Moore-Penrose left pseudo-inverse produces that solution, given by $G$. If no solution exists, $G$ is the best least-squares approximation.  
\end{corollary}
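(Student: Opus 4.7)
The plan is to show that this corollary is a direct consequence of Proposition 4.3 (anode\_general\_form) together with standard facts about the Moore--Penrose pseudo-inverse, so the proof is essentially a dimension-counting argument followed by an invocation of linear algebra results.

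First, I would recast the coupling condition from Proposition 4.3 as the linear system
\begin{equation}
\frac{\partial F}{\partial \mathbf{a}^{T}}\, G \;=\; f^{(a)} - \frac{\partial F}{\partial \mathbf{x}^{T}} F - \frac{\partial F}{\partial t},
\end{equation}
and examine its shape pointwise in $(\mathbf{x},\mathbf{a},t)$. The matrix $A := \partial F/\partial \mathbf{a}^{T}$ is of size $\dim(\mathbf{x}) \times \dim(\mathbf{a})$, the unknown $G$ is a vector of length $\dim(\mathbf{a})$, and the right-hand side $b$ lives in $\mathbb{R}^{\dim(\mathbf{x})}$. When $\dim(\mathbf{a}) < \dim(\mathbf{x})$, there are strictly more equations than unknowns, so the system is overdetermined, justifying the statement's terminology.

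Next, I would appeal to the standard construction of the Moore--Penrose left pseudo-inverse. Assuming $A$ has full column rank (the generic case when $\dim(\mathbf{a}) < \dim(\mathbf{x})$), $A^{T}A$ is invertible and $A^{+} := (A^{T}A)^{-1}A^{T}$ satisfies $A^{+}A = I$. Two cases then follow immediately: if the system $AG = b$ admits a solution $G^{*}$, then $A^{+}b = A^{+}AG^{*} = G^{*}$, so applying the pseudo-inverse recovers the true solution; if $b \notin \mathrm{Im}(A)$, the vector $A^{+}b$ is the unique minimiser of $\|AG - b\|_{2}^{2}$, by the classical normal-equation argument $A^{T}(AG - b) = 0$. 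Substituting $A$ and $b$ back into the expression yields exactly the formula for $G$ stated in Proposition 4.3, confirming that the ``left inverse'' there should be read as the Moore--Penrose left pseudo-inverse in the minimally augmented regime.

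The main obstacle, though mostly a technical one, is the possibility of rank deficiency of $\partial F/\partial \mathbf{a}^{T}$ at isolated points along a trajectory, in which case $(A^{T}A)^{-1}$ does not exist and one must pass to the general pseudo-inverse defined via the singular value decomposition. I would note that the least-squares interpretation remains valid in this generality, and point out that for a trained neural network $F$ the full-rank condition holds almost everywhere, so the corollary's conclusion is unaffected in practice. No additional machinery beyond the Moore--Penrose construction is needed to finish the argument.
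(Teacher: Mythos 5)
Your proof is correct and follows essentially the same route as the paper, which likewise treats the corollary as a pointwise linear system $AG=b$ with $A=\partial F/\partial\mathbf{a}^{T}$ and invokes the standard full-column-rank formula $(A^{T}A)^{-1}A^{T}$ for the left pseudo-inverse together with its exact-recovery and least-squares properties. Your added remark on possible rank deficiency is a reasonable technical caveat but does not change the argument.
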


\begin{figure}[h]
    \centering
    \includegraphics[width=0.9\textwidth]{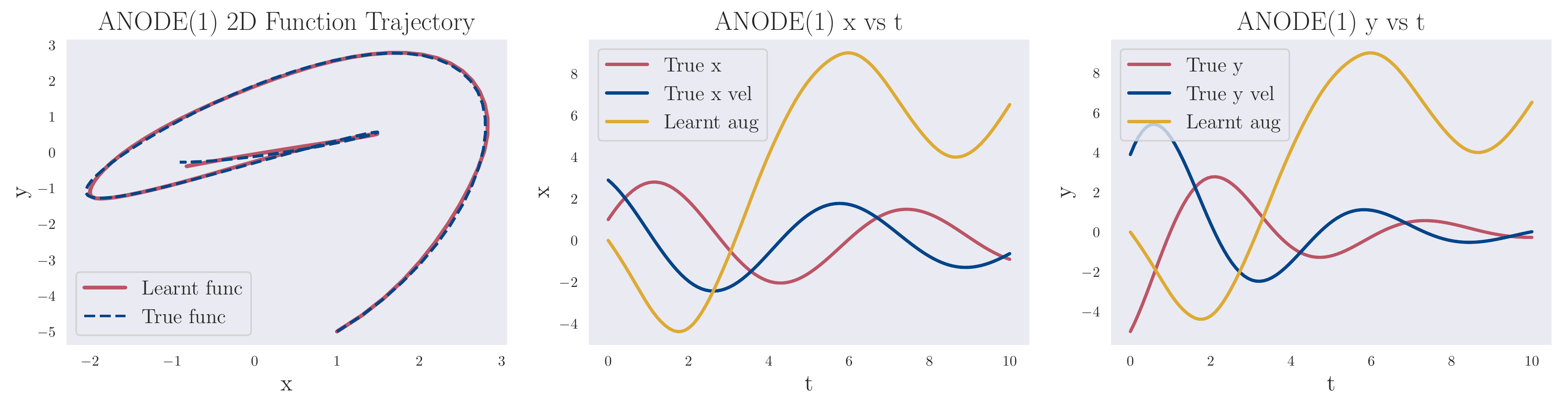}
    \vspace{-10pt}
    \caption{ANODE(1) learning a 2D second order function. ANODE(1) is able to learn the function, showing that it does not necessarily need double the dimensions to learn second order.}
    \label{fig: anode_1_2dfunc}
\end{figure}

In effect, ANODE is learning a system of linear equations parametrised by deep neural networks. To learn second order dynamics with minimal augmentation, it must learn an overdetermined linear system allowing a solution. Depending on the form of $\ddot{\mathbf{x}}$, it is possible that an $F$ with explicit $\mathbf{a}$ dependence that produces a degenerate system like this could be learned. In turn, this would allow a complementary $G$ to be learned. In fact, systems like this can naturally arise when the dynamics are latent and lower-dimensional and many of the observed dimensions become redundant. For instance, two spatial dimensions suffice for a pendulum moving in a plane of the 3D space.    

However, even if an overdetermined system allowing a solution could not be learned due to the additional constraints acting on $F$, the left Moore-Penrose pseudo-inverse from Proposition \ref{prop: anode_general_form} would still produce a $G$ that is a best least-squares approximation. If the matrix $A = \frac{\partial F}{\partial \mathbf{a}^{T}}$ has full rank, then the left inverse is given by $(A^TA)^{-1}A^T$. In general, the closer dim$(\mathbf{a})$ gets to dim$(\mathbf{x})$, the better this approximation will be. 

To demonstrate minimal augmentation, we consider a two dimensional second order ODE, whose starting conditions and respective $\omega$'s and $\gamma$'s were chosen randomly such that
\begin{equation}
\label{eq:2d_function}
    \begin{bmatrix}
    \ddot{x} \\
    \ddot{y} \\
    \end{bmatrix}
    =
    \begin{bmatrix}
    -(\omega_{x}^{2} + \gamma_{x}^{2})x -2\gamma_{x}\dot{x} \\
    -(\omega_{y}^{2} + \gamma_{y}^{2})y -2\gamma_{y}\dot{y} \\
    \end{bmatrix},
\qquad
    \begin{bmatrix}
    x \\
    y \\
    \end{bmatrix}
    =
    \begin{bmatrix}
    e^{-0.1t}(3\sin(t)+\cos(t)) \\
    e^{-0.3t}(2\sin(1.2t)-5\cos(1.2t)) \\
    \end{bmatrix}
\end{equation}
ANODE(1) is able to learn this function as shown in Figure \ref{fig: anode_1_2dfunc}. Moreover, the augmented dimension trajectory differs greatly from the velocity of the ODE in either of the two spatial dimensions.

\subsection{Interpretability of ANODEs}
\label{sec: anodes_interpretability}

The result from Proposition \ref{prop: anode_general_form} also raises the issue of how \textit{interpretable} ANODEs are. For example, when investigating the dynamics of physical systems it is useful to know the force equation. This is straightforward with SONODEs, which directly learn the acceleration as a function of position, velocity and time. However, ANODEs learn the dynamics through an abstract alternative ODE where the state and augmented dimensions are entangled. This is similar to the widely studied problem of entangled representations~\citep{Higgins2017betaVAELB, Mathieu2019DisentanglingDI, bengio2012representation}. 

We then train both ANODE(2) and SONODE to learn the dynamics of the ODE from Equation (\ref{eq:2d_function}), and provide them both with the correct initial velocity. Figure \ref{fig: interpretability} shows the results for two different runs for both models. Though ANODE(2) is able to learn the true trajectory in real space, the augmented trajectories differ greatly from the true velocity of the underlying ODE. In contrast, SONODE learns the correct velocity for both runs. This simple experiment confirms that ANODEs might not be a suitable investigative tool for scientific applications, where the physical interpretability of the results is important.

\subsection{The functional loss landscape}
The functional forms the two models converge to in  Figure \ref{fig: interpretability} are not a coincidence. Proposition \ref{prop: anode_general_form} also has deeper implications for the ANODE's \textit{(functional) loss landscape} when learning second order dynamics. Please refer to Appendix \ref{app: anode_learn_second_order} for the proofs of the following results.  

\begin{proposition}
\label{prop: anode_infinity}
There are an infinity of (non-trivial) functional forms ANODEs can learn that model the true second order dynamics in real space.
\end{proposition}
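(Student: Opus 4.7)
The plan is to exploit Proposition~\ref{prop: anode_general_form} directly: it already shows that, for any choice of the abstract velocity function $F(\mathbf{x}, \mathbf{a}, t, \theta_F)$ whose augmented-Jacobian $\partial F/\partial \mathbf{a}^T$ admits a left (pseudo-)inverse, there is a unique companion $G = G[F]$ such that the coupled ANODE system reproduces the prescribed second-order dynamics $\ddot{\mathbf{x}} = f^{(a)}(\mathbf{x}, \dot{\mathbf{x}}, t)$ in real space. To prove infinitude of functional forms, it therefore suffices to exhibit an infinite family of admissible $F$'s, each yielding a distinct $(F, G[F])$ pair.

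First I would fix the target dynamics and initial data, and make precise what \emph{admissible} means: (i) $F$ matches the initial velocity, i.e.\ $F(\mathbf{x}(t_0), \mathbf{a}(t_0), t_0) = \dot{\mathbf{x}}(t_0)$; and (ii) $\partial F/\partial \mathbf{a}^T$ has full column rank along the solution trajectory, so that the left pseudo-inverse formula of Proposition~\ref{prop: anode_general_form} is well-defined. For any admissible $F$, the construction of $G[F]$ is explicit and the verification that it indeed yields $\ddot{\mathbf{x}} = f^{(a)}$ is the same differentiation already packaged in that proposition; I would invoke it rather than repeat it.

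Next I would exhibit an explicit infinite family. The cleanest route is to promote the example in Equation~(\ref{eqn: anode_learning_double_func}) to a template: there, the scalar $C \in \mathbb{R} \setminus \{0\}$ freely parametrises a continuum of distinct admissible $F$'s, all reproducing the same real-space trajectory. For a generic construction, starting from any admissible $F_0$ and any smooth perturbation $\eta(\mathbf{x}, \mathbf{a}, t)$ that vanishes on the initial slice $t = t_0$, the family $F_\lambda = F_0 + \lambda \eta$ remains admissible for every $\lambda$ in a neighbourhood of $0$ (left-invertibility is an open condition), and the corresponding $G[F_\lambda]$ varies continuously with $\lambda$. This yields an uncountable one-parameter family of non-trivial functional forms satisfying the claim.

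The main obstacle is twofold. First, establishing \emph{non-triviality}: I must show that distinct $\lambda$'s give genuinely different $(F_\lambda, G[F_\lambda])$ pairs and distinct augmented trajectories $\mathbf{a}(t)$, not merely reparametrisations of a single solution. This can be enforced by choosing $\eta$ so that $\partial F_\lambda/\partial \mathbf{a}^T$ actually changes with $\lambda$, which then forces $G[F_\lambda]$ to vary through the left pseudo-inverse. Second, controlling admissibility \emph{globally} on $[t_0, t_N]$: left-invertibility is an open but only local condition, so I would either restrict to a small neighbourhood of $\lambda = 0$ or design $\eta$ to preserve rank by construction (e.g.\ taking $\eta$ independent of $\mathbf{a}$, so that the Jacobian in question is unchanged). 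Once these points are addressed, infinitude follows immediately from the presence of even a single free continuous parameter.
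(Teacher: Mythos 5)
Your proposal is correct and takes essentially the same route as the paper: the paper perturbs an existing solution by an arbitrary $\phi(\mathbf{x})$ with $\phi(\mathbf{x}_0)=0$ (exactly your ``$\eta$ independent of $\mathbf{a}$, vanishing at the initial condition'' variant, which leaves $\partial F/\partial \mathbf{a}^{T}$ and hence admissibility untouched) and then invokes Proposition~\ref{prop: anode_general_form} to construct the compensating $G$. The only cosmetic difference is that the paper writes out the resulting $\tilde{G}$ explicitly via a Taylor expansion of $f^{(a)}$ in its velocity argument, whereas you leave $G[F_\lambda]$ implicit.
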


This means that there is an infinite number of functions ANODEs can approximate and obtain a zero loss. This suggests that an infinite number of global minima, representing different functions, may exist in the loss landscape of ANODEs. In contrast, we show that the second order constraints imposed on SONODE enforce that any global minima in its loss landscape approximate the same function --- the acceleration and, in some cases, the initial velocity. 

\begin{proposition}
\label{prop: sonode_unique}
There is a unique functional form SONODEs can learn that models the true second order dynamics in real space.
\end{proposition}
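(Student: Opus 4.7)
The plan is to exploit the structural constraint built into SONODEs: the real-space component $\mathbf{x}$ of the state is the physical variable itself (not an abstract augmentation), so matching the true trajectory in real space automatically forces the SONODE's velocity and acceleration to match those of the true system. This in turn pins down $f^{(a)}$ and $g$ on the portions of their domains that the trajectories actually probe, yielding a unique learned functional form.

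More precisely, suppose the true dynamics are $\ddot{\mathbf{x}} = h(\mathbf{x}, \dot{\mathbf{x}}, t)$ with initial velocity $\dot{\mathbf{x}}(t_{0}) = v(\mathbf{x}(t_{0}))$, and that a SONODE with learned $f^{(a)}$ and $g$ attains zero loss, so that $\mathbf{x}_{\text{SONODE}}(t) = \mathbf{x}_{\text{true}}(t)$ for every training initial condition $\mathbf{X}_{0}$ and every $t \in [t_{0}, t_{N}]$. First, differentiating once in $t$ yields $\dot{\mathbf{x}}_{\text{SONODE}}(t) = \dot{\mathbf{x}}_{\text{true}}(t)$; evaluating at $t = t_{0}$ then forces $g(\mathbf{X}_{0}) = v(\mathbf{X}_{0})$ for every $\mathbf{X}_{0}$, uniquely determining $g$ on the set of initial positions (which is precisely the ``in some cases'' caveat: this only makes sense when the true initial velocity is itself a function of the initial position). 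Second, differentiating once more gives $f^{(a)}(\mathbf{x}(t), \dot{\mathbf{x}}(t), t) = h(\mathbf{x}(t), \dot{\mathbf{x}}(t), t)$ along every learnt trajectory, so for a sufficiently rich family of initial conditions the two maps agree on a set with nonempty interior in phase space, uniquely pinning down $f^{(a)}$ as a functional form on that region.

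Contrasting with Proposition \ref{prop: anode_infinity} clarifies why this rigidity holds for SONODEs but not ANODEs: in an ANODE only the real-space component of the state is constrained by the loss, so the augmented coordinates are free to follow any trajectory consistent with the coupling in Proposition \ref{prop: anode_general_form}, generating families such as the $C$-parametrised one in Equation (\ref{eqn: anode_learning_double_func}). SONODEs eliminate this freedom by construction, because the velocity slot of the state is identified with the physical velocity and no alternative entangled representation is available. The main point to handle carefully is the notion of uniqueness itself: parameter-level uniqueness trivially fails since a neural network admits many parameter realisations of the same map, and global uniqueness on all of phase space is too strong; the correct statement is that any two zero-loss SONODEs must agree as functions on the subset of phase space swept out by the training trajectories (respectively on the set of training initial positions for $g$), which is the operationally meaningful notion and suffices to distinguish the SONODE loss landscape from that of ANODEs.
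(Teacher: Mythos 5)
Your proposal is correct and follows essentially the same route as the paper's own proof: differentiate the matched real-space trajectory once to force agreement of the velocities (hence of $g$ at the initial positions) and once more to force agreement of the accelerations, so any two zero-loss SONODEs must realise the same $f^{(a)}$ and initial velocity. Your added care about \emph{where} uniqueness holds --- only on the region of phase space actually swept by the training trajectories --- is a legitimate refinement of a step the paper's proof passes over silently when it promotes $f(\mathbf{x}(t),\mathbf{v}(t),t)=\tilde f(\tilde{\mathbf{x}}(t),\tilde{\mathbf{v}}(t),t)$ along trajectories to equality of the functions themselves.
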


\begin{figure}
    \centering
    \includegraphics[width=\textwidth]{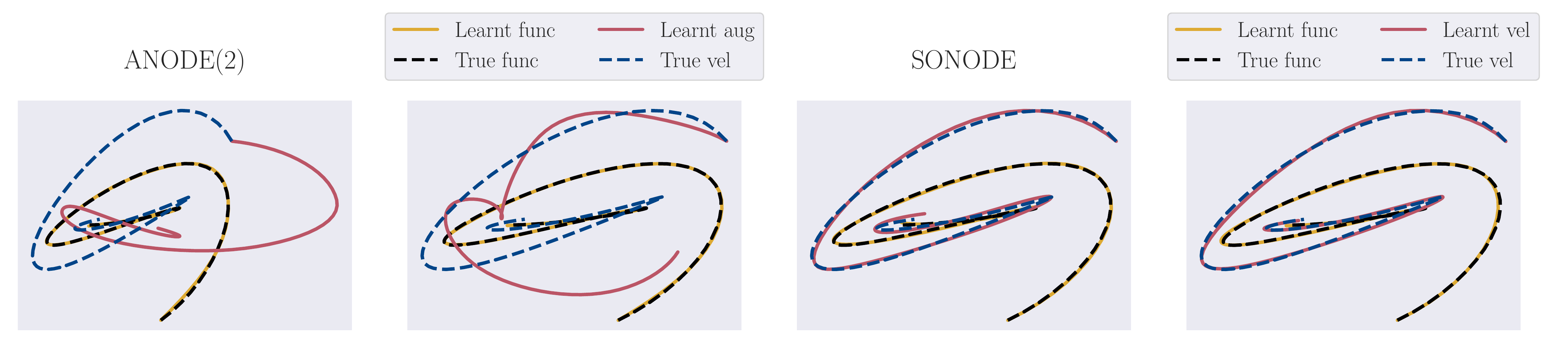}
    \vspace{-8pt}
    \caption{ANODEs and SONODEs successfully learn the trajectory in real space of a 2D ODE for two different random initialisations. However, the augmented trajectories of ANODE are in both cases widely different from the true velocity of the ODE. In contrast, SONODE converges in both cases to the true ODE.}
    \vspace{-10pt}
    \label{fig: interpretability}
\end{figure}

 This is confirmed by our experiment from the previous section, where ANODE always converges to another augmented trajectory for each random initialisation (only two shown in the Figure \ref{fig: interpretability}), while SONODE always converges to the correct underlying ODE.

\section{Experiments on second order dynamics}
\label{sec: experiments}

\begin{wrapfigure}{l}{0.35\textwidth}
    \begin{center}
    \vspace{-15pt}
    \includegraphics[width=0.34\textwidth]{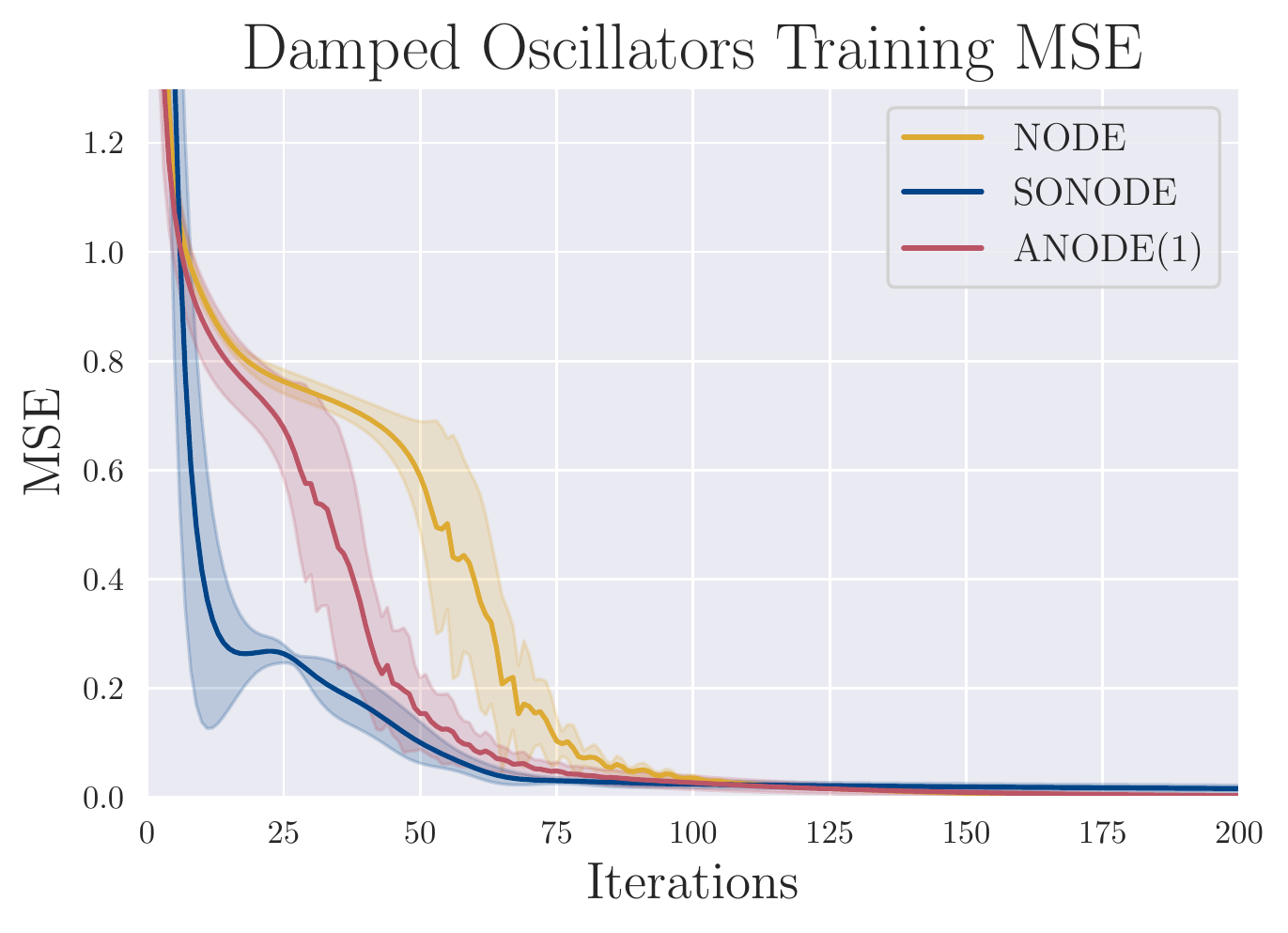}
    \caption{NODE, ANODE(1) and SONODE training on harmonic oscillators. SONODEs already have the second order behaviour built in as an architectural choice, so they are able to learn the dynamics in fewer iterations.}
    \vspace{-45pt}
    \label{fig: damped_oscillators_loss}
    \end{center}
\end{wrapfigure}

To test our above predictions, we perform an extensive comparison of ANODE and SONODE on a set of more challenging real and synthetic modelling tasks. These experiments provide further evidence for the described theoretical findings. Additional experimental details regarding the models and additional results are given in Appendix \ref{app: experimental_setup}.

\subsection{Synthetic harmonic oscillators and noise robustness} \label{sec:synthetic}

\paragraph{Harmonic oscillator} The most obvious application of SONODEs is on dynamical data from classical physics. This was tested by looking at a damped harmonic oscillator $\ddot{x} = -(\omega^{2}+\gamma^{2})x - 2\gamma\dot{x}$ with
$\gamma = 0.1$ and $\omega = 1$ on 30 random pairs of initial positions and velocities. These were each evolved for 10 seconds, using one hundred evenly spaced time stamps. The loss depended on both position and velocity explicitly, therefore the models used the state $\mathbf{z}=[x, v]$ with the option of augmentation for ANODEs. NODEs and ANODEs learnt a general $\dot{\mathbf{z}}$, whereas SONODEs are given $\dot{\mathbf{z}} = [v, f^{(a)}]$ and only learn $f^{(a)}$. SONODEs leverage their inductive bias and converge faster than the other models. Note that, all models were able to reduce the loss to approximately zero, as shown in Figure \ref{fig: damped_oscillators_loss}.

\paragraph{Noise robustness} We tested the models' abilities to learn a sine curve in varying noise regimes. The models were trained on fifty training points in the first ten seconds of $x=\sin(t)$, and then tested with ten points in the next five seconds. The train points had noise added to them, drawn from a normal distribution $\mathcal{N}(0,\sigma^2)$ for different standard deviations $\sigma = (0,0.1,0.2,\dots, 0.7$). The results presented in Figure \ref{fig: noise_robustness} show that SONODEs are more robust to noise. 

\begin{figure}[h]
    \centering
    \includegraphics[width=0.98\textwidth]{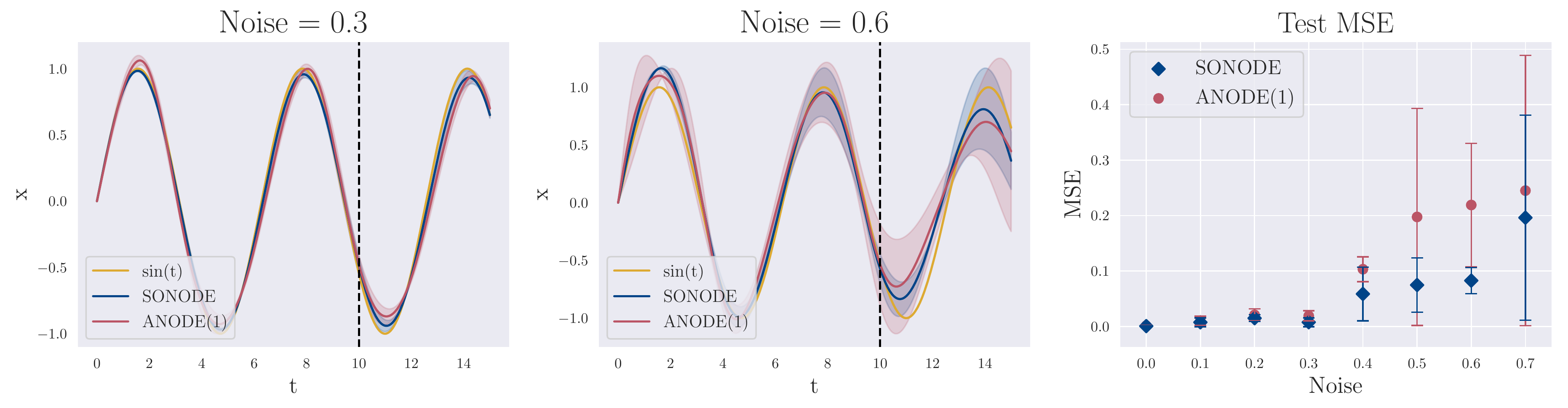}
    \vspace{-10pt}
    \caption{How SONODEs and ANODEs perform learning a sine curve in different noise regimes. The dotted line separates training and testing regimes. SONODEs are able to extrapolate better than ANODEs because they are forced to learn second order dynamics, and therefore are less likely to overfit the training points.}
    \label{fig: noise_robustness}
\end{figure}

\subsection{Experiments on real-world dynamical systems}
\label{sec: planes}

\paragraph{Airplane vibrations} The dataset \cite{noel2017f} concerns real vibrations measurements of an airplane. A shaker was attached underneath the right wing, producing an acceleration $a_{1}$. Additional accelerations at different points were measured including $a_{2}$, which was examined in this experiment, the acceleration on the right wing, next to a non-linear interface of interest. This is a higher order system, therefore it pertains to be a challenging modelling task. The results presented in Figure \ref{fig: f16} show that while both methods can model the dynamics reasonably well, ANODEs perform marginally better. We conjecture that this is due to ANODEs not being restricted to second order behaviour, allowing them to partially access higher order dynamics. We test this conjecture in Appendix \ref{app: plane_part_2}.

\begin{figure}[h]
    \centering
    \includegraphics[width=0.98\textwidth]{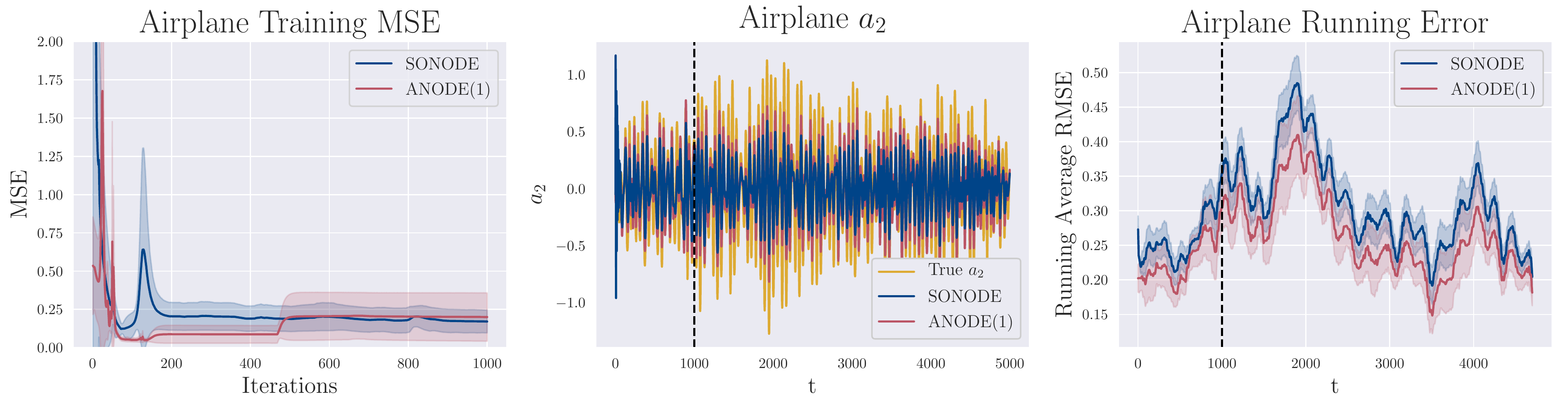}
    \vspace{-10pt}
    \caption{ANODE(1) and SONODE on the Airplane Vibrations dataset. ANODEs are able to perform slightly better than SONODEs because they are able to access higher order dynamics. The models were trained on the first 1000 timestamps and then extrapolated to the next 4000.}
    \label{fig: f16}
\end{figure}

\paragraph{Silverbox oscillator} The Silverbox dataset \cite{6669201} is an electronic circuit resembling a Duffing Oscillator, with input voltage $V_{1}(t)$ and measured output $V_{2}(t)$. The non-linear model Silverbox represents is $\ddot{V}_{2} = a\dot{V}_{2} + bV_{2} + cV_{2}^{3} + dV_{1}$. To account for this, all models included a $V_{2}^{3}$ term. 
The results can be seen in Figure \ref{fig: silverbox}. On this second order system, SONODEs extrapolate better than ANODEs and are able to capture the increase in the amplitude of the signal exceptionally well. 

\begin{figure}[h]
    \centering
    \includegraphics[width=0.98\textwidth]{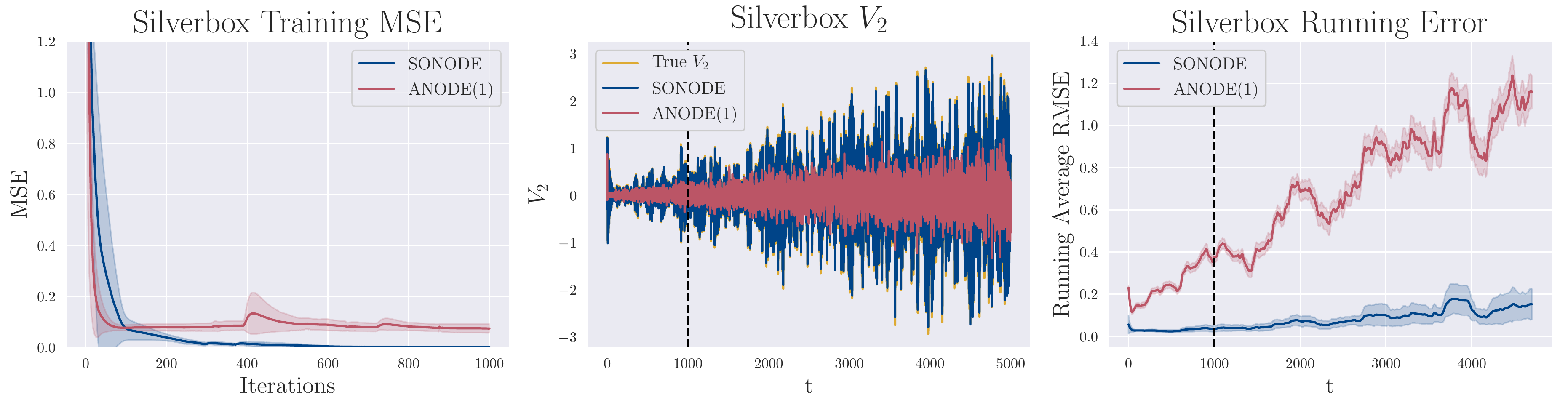}
    \vspace{-10pt}
    \caption{ANODE(1) and SONODE on the Silverbox dataset. SONODEs are able to reduce the loss faster and to a lower value than ANODEs, as expected when second order behaviour is built in. The models were trained on the first 1000 timestamps and extrapolated to the next 4000.}
    \vspace{-10pt}
    \label{fig: silverbox}
\end{figure}

\section{Discussion and related work}

\paragraph{SONODEs vs ANODEs} SONODEs can be seen as a special case of ANODEs, whose phase space dynamics are restricted to model second order behaviour. We believe that for tasks where the trajectory is unimportant, and performance depends only on the endpoints (such as classification), ANODEs might perform better because they are unconstrained in how they use their capacity (see Appendix \ref{app: mnist}). In contrast, we expect SONODEs to outperform ANODEs both in terms of accuracy and convergence rate on time series data whose underlying dynamics is assumed (or known) to be second order. In this setting, SONODEs have a unique functional solution and fewer local minima compared to ANODEs. At the same time, they have higher parameter efficiency since $\dot{\mathbf{x}}=\mathbf{v}$ requires no parameters, so all parameters are in the acceleration. Finally, we expect SONODEs to be more appropriate for application in the natural sciences, where second order dynamics are common and it is useful to recover the force equation. 

\paragraph{Second Order Models} Concurrent to our work, SONODEs have been briefly evaluated on MNIST by \citet{massaroli2020dissecting} as part of a wider study on Neural ODEs. In contrast, our study is focused on the theoretical understanding of second order behaviour. At the same time, our investigations are largely based on learning the dynamics of physical systems rather than classification tasks. Second order models have also been considered in Graph Differential Equations~\citep{poli2019graph} and ODE$^2$VAE \citep{yldz2019ode2vae}. 

\paragraph{Physics Based Models} In the same way SONODEs assert Newtonian mechanics, other models have been made to use physical laws, guaranteeing physically plausible results, in discrete and continuous cases. \citet{lutter2019deep} apply Lagrangian mechanics to cyber-physical systems, while \citet{greydanus2019hamiltonian} and \citet{zhong2019symplectic} use Hamiltonian mechanics to learn dynamical data. 

\section{Conclusion}

In this paper, we took a closer look at how Neural ODEs (NODEs) can learn second order dynamics. In particular, we considered Second Order NODEs (SONODEs), a model constructed with this inductive bias in mind, and the more general class of Augmented Neural ODEs (ANODEs). We began by shedding light on the optimisation of SONODEs by generalising the adjoint sensitivity method from NODEs and comparing it with the training procedure of the equivalent coupled ODE. We also studied the theoretical properties of SONODEs and how they manifest in modelling toy problems.

We showed that, despite lacking the physics-based inductive biases of SONODEs, ANODEs are flexible enough to learn second order dynamics in practice. However, we also demonstrated, analytically and empirically, that they do this by learning to approximate an abstract coupled ODE where the state and augmented dimensions become entangled in the velocity. We proved that this has implications for interpretability in scientific applications as well as the `shape' of the loss landscape. Our experiments on synthetic and real second order dynamical systems validate these concerns and reveal that the inductive biases of SONODE are generally beneficial in this setting. Although this work investigates second order dynamics, the underlying principles of SONODEs can be readily extended to higher orders (a proof-of-principle is given in Appendix~\ref{app: plane_part_2}). This, in turn, allows for modelling richer and more complex behaviour, while retaining the benefits of faster training and better modelling performance.

\clearpage

\section*{Broader Impact}
Neural ODEs are relatively new models and we are yet to see their full potential. We anticipate NODEs will see particular success in time-series data, which have a wide variety of real-world applications. Examples given by \citet{jia2019neural} include the evolution of individuals' medical records and earthquake monitoring. \citet{poli2019graph} look at traffic forecasting and \citet{greydanus2019hamiltonian} show how a Neural ODE inspired by Hamiltonian mechanics can be applied to classical physics. Our work concerns Second Order Neural ODEs which can also be applied to classical physics, where Newton's second law describes the forces on an object. 

Our theoretical work was concerned with demonstrating how best to use the adjoint method on SONODEs, and showing how the coupled ODE perspective of ANODEs leads to them being able to learn second order behaviour. Naturally, any impacts from this work will come from the applications of SONODEs.

We directly investigated two potential real-world applications of SONODEs. The Silverbox dataset, an electronic implementation of a damped spring with a non-linear spring constant. This naturally applies to circuits with oscillators, and damped elements, opening new directions to monitor circuits and signals. The dynamics can also be encountered in mechanical systems, including car suspension, which could be used to improve car safety. Note that, in our experiments, we also investigated the task of modelling the vibration dynamics of an aeroplane, which might lead to better and optimal aeroplane designs. Though contributions to civil mechanical engineering such as these have parallel applications in the design of weapons, it is not the case that our investigation expands technological capabilities in such a way as to enable new forms of warfare or to significantly improve current technologies (at this stage.)

As stated, Neural ODEs are relatively new, and we are yet to see their full potential. We anticipate more applications to time series data in the future, which have many positive and negative applications, though at most we should think of our contribution as incremental in this regard and covered by existing institutions and norms.

\begin{ack}


We would like to thank C\u{a}t\u{a}lina Cangea, Jacob Deasy and Duo Wang for their helpful comments. We would like to also thank the reviewers for their constructive feedback and efforts towards improving our paper. The authors declare no competing interests.

\end{ack}


\bibliographystyle{plainnat}
\bibliography{references}

\newpage
\appendix

\section{Phase Space Trajectory Proofs}
\label{app: ode_proofs}
Here we present the proofs for the propositions from Section \ref{sec: node_problems}, concerning a $k$-th order initial value problem.

\begin{lemma}
\label{lem: no_kink}
For a k-th order IVP, where the k-th derivative is Lipschitz continuous, a solution cannot have discontinuities in the time derivative of its phase space trajectory.
\end{lemma}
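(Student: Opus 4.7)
The plan is to write the $k$-th order IVP in its standard first-order companion form, identify what ``the time derivative of the phase space trajectory'' means, and then verify continuity component by component using only the Lipschitz (hence continuous) assumption on the top-order derivative.

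Concretely, I would start by writing the ODE as $x^{(k)} = f(x,\dot x,\ldots,x^{(k-1)},t)$ and letting
\begin{equation*}
\Phi(t) = \bigl(x(t),\dot x(t),\ldots,x^{(k-1)}(t)\bigr)
\end{equation*}
denote the phase space trajectory. Its time derivative is then
\begin{equation*}
\dot\Phi(t) = \bigl(\dot x(t),\ddot x(t),\ldots,x^{(k-1)}(t),x^{(k)}(t)\bigr),
\end{equation*}
so proving continuity of $\dot\Phi$ amounts to proving continuity of each of its $k$ components.

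The next step is to handle the first $k-1$ entries. By the very definition of a solution to a $k$-th order IVP, the function $x$ is $k$ times differentiable on the interval of interest; in particular the derivatives $\dot x,\ddot x,\ldots,x^{(k-1)}$ exist pointwise, and being themselves differentiable they are continuous. This takes care of the first $k-1$ slots of $\dot\Phi$ directly.

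For the remaining slot $x^{(k)}$, I would use the ODE itself: $x^{(k)}(t) = f(\Phi(t),t)$. Since $f$ is assumed Lipschitz continuous in its arguments it is in particular continuous, and we have just argued that $\Phi(t)$ is continuous in $t$; composition of continuous maps then gives continuity of $x^{(k)}$. Combining with the previous step, all components of $\dot\Phi$ are continuous, so $\dot\Phi$ has no discontinuities, i.e. the phase space trajectory has no ``kink''. There is no serious obstacle here --- the only subtle point is being careful that ``solution'' entails existence and continuity of all derivatives up to order $k-1$, so that the Lipschitz hypothesis on the top-order derivative is what carries the result through the ODE.
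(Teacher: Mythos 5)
Your proof is correct, but it takes a genuinely different (and cleaner) route than the paper's. You decompose $\dot\Phi$ componentwise and observe that the first $k-1$ entries, being derivatives of orders $1$ through $k-1$ of a $k$-times differentiable solution, are themselves differentiable and hence continuous, while the top entry $x^{(k)} = f(\Phi(t),t)$ is continuous as a composition of the continuous $f$ (Lipschitz implies continuous) with the continuous $\Phi$. The paper instead runs a top-down finiteness cascade: it uses the Lipschitz bound to argue that $f$ stays finite for finite arguments, hence $x^{(k-1)}$ has a finite derivative and so cannot jump and stays finite, and so on down to $x$. Your argument isolates the actual content of the ``no kinks'' claim --- continuity of $\dot\Phi$ --- and needs only continuity of $f$, not the full Lipschitz hypothesis; the paper's version uses Lipschitz continuity to additionally (if informally) rule out the trajectory blowing up at finite time, a global-existence concern that you sidestep by assuming the solution exists on the whole interval of interest. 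Both are legitimate; yours is the more economical proof of the stated lemma, while the paper's bundles in the finiteness guarantee that it implicitly relies on later.
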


\begin{proof}
Consider the phase space trajectory $\displaystyle \mathbf{z}(t) = \left[\mathbf{x}(t), \frac{d\mathbf{x}}{dt}(t), ... , \frac{d^{k-1}\mathbf{x}}{dt^{k-1}}(t) \right]$. Let $f$ be the k-th time derivative of $\mathbf{x}(t)$. Then the time derivative of $\mathbf{z}(t)$ is
\[
\frac{d}{dt}
\begin{bmatrix}
\mathbf{x}\\[5pt]
\displaystyle \frac{d\mathbf{x}}{dt}\\
...\\
\displaystyle \frac{d^{k-1}\mathbf{x}}{dt^{k-1}}\\
\end{bmatrix}
=
\begin{bmatrix}
\displaystyle \frac{d\mathbf{x}}{dt}\\[7pt]
\displaystyle \frac{d^{2}\mathbf{x}}{dt^{2}}\\
...\\
f(\mathbf{z})\\
\end{bmatrix}
\]
If for one set of finite arguments, $\mathbf{z}_{1}$, $f(\mathbf{z_{1}})$ is also finite, then because the gradients of $f$ are all bounded (due to Lipschitz continuity), for any other finite arguments, $\mathbf{z}_{n}$, $f(\mathbf{z}_{n})$ will remain finite. Now consider $\displaystyle  \frac{d^{k-1}\mathbf{x}}{dt^{k-1}}$, its time derivative is $f(\mathbf{z}(t))$, which is finite for all finite $\mathbf{z}$. Therefore, $\displaystyle  \frac{d^{k-1}\mathbf{x}}{dt^{k-1}}$, can't have discontinuities with a finite derivative, and also must be finite for finite $\mathbf{z}$. Now consider $\displaystyle  \frac{d^{k-2}\mathbf{x}}{dt^{k-2}}$, its time derivative is finite for all finite $\mathbf{z}$, and therefore it can't have discontinuities and also must be finite for all finite $\mathbf{z}$. This line of argument continues up to $\mathbf{x}$. The state $\mathbf{x}$ and all of its time derivatives up to the $k$-th have no discontinuities and are finite. Therefore as long as the initial conditions $\mathbf{z}(t_{0})$ are finite, there can be no discontinuities in the time derivative of the phase space trajectory at finite time.
\end{proof}

\textbf{Proposition \ref{prop: no_crossing_different}.} \textit{For a k-th order IVP, if the k-th derivative of $\mathbf{x}$ is Lipschitz continuous and has no explicit time dependence, then unique phase space trajectories cannot intersect at an angle. Similarly, a single phase space trajectory cannot intersect itself at an angle.}

\begin{proof}
Consider two trajectories $\mathbf{z}_{1}(t)$ and $\mathbf{z}_2(t)$ that have different initial conditions $\mathbf{z}_{1}(t_{0}) = \mathbf{h}_1$ and $\mathbf{z}_{2}(t_{0}) = \mathbf{h}_2$. Assume the trajectories cross at a point in phase space at an angle, $\mathbf{z}_1(t_{1}) = \mathbf{z}_{2}(t_{2}) = \mathbf{\tilde{h}}$. If they intersect at an angle, then evolving the two states by a small time $\delta t << 1$, and using the Lipschitz continuity of $f$, meaning that the trajectories cannot have kinks in them (as shown in Lemma \ref{lem: no_kink}), $\mathbf{z}_{1}(t_{1}+\delta t) \neq \mathbf{z}_{2}(t_{2}+\delta t)$. However, if they are at the same point in phase space, then they must have the same k-th order derivative, $f$. All other derivatives are equal, so by evolving the states by the same small time $\delta t << 1$, $\mathbf{z}_{1}(t_{1}+\delta t) = \mathbf{z}_{2}(t_{2}+\delta t)$. There is a contradiction  and therefore the assumption is wrong, unique trajectories cannot cross at an angle in phase space when $f$ is Lipschitz continuous and has no $t$ dependence.

Now consider the single trajectory $\mathbf{z}(t)$. Assume it intersects itself at an angle, at $t_{1}$ and $t_{2}$. Now consider two particles on this trajectory, starting at $t_{1}-\tau$ and $t_{2}-\tau$ such that $t_{2}-\tau > t_{1}$. These two particles have different initial conditions and cross at an angle. However, the above shows that cannot happen. Therefore, the assumption that $\mathbf{z}(t)$ can intersect itself at an angle must be wrong. Trajectories cannot intersect themselves in phase space at an angle.
\end{proof}

Trajectories can, however, feed into each other representing the same particle path at different times. Single phase space trajectories can feed into themselves representing periodic motion. This requires a Lipschitz continuous $f$, and for there to be no explicit time dependence. If there was time dependence then two trajectories can cross at different times, and a trajectory can self intersect. Effectively an additional dimension is added to phase space, which is time. The propositions above would still hold because $\displaystyle  \frac{dt}{dt} = 1$ which is Lipschitz continuous. Therefore, with time included as a phase space dimension, intersections in space are only forbidden if they occur at the same time.

\section{Adjoint Sensitivity Method}
\label{app: adjoint_method}

We present a proof to both the first and second order Adjoint method, using a Lagrangian style approach \citep{stanford_adjoint_derivation, Gholaminejad_2019}. We also prove that when the underlying ODE is second order, using the first order method on a concatenated state, $\mathbf{z} = [\mathbf{x}, \mathbf{v}]$, produces the same results as the second order method but does so more efficiently. All parameters, $\theta$, are time-dependent.

\subsection{First Order Adjoint Method}

Let L denote a scalar loss function, $L = L(\mathbf{x}(t_{n}))$, the gradient with respect to a parameter $\theta$ is
\begin{equation}
\label{eqn: dLdtheta}
    \frac{dL}{d\theta} = \frac{\partial L}{\partial \mathbf{x}(t_{n})^{T}}\frac{d\mathbf{x}(t_{n})}{d\theta}
\end{equation}

The vector $\displaystyle  \frac{\partial L}{\partial \mathbf{x}(t_{n})^{T}}$ is found using backpropagation. For dynamical data the loss will depend on multiple time stamps, there is also a sum over timestamps, $t_{n}$. Therefore $\displaystyle  \frac{d\mathbf{x}(t_{n})}{d\theta}$ is needed. $\mathbf{x}(t_{n})$ follows
\begin{equation}
    \mathbf{x}(t_{n}) = \int_{t_{0}}^{t_{n}}\dot{\mathbf{x}}(t)dt + \mathbf{x}(t_{0})
\end{equation}
subject to
\begin{equation}
    \dot{\mathbf{x}} = f^{(v)}(\mathbf{x}, t,\theta_{f}),
    \qquad
    \qquad
    \mathbf{x}(t_{0}) = s(\mathbf{X}_{0}, \theta_{s})
\end{equation}

where $\mathbf{X}_{0}$ is the data going into the network and is constant. The functions $f^{(v)}$ and $s$ describe the ODE and the initial conditions. Here we allow $\mathbf{X}_{0}$ to first go through the transformation, $s(\mathbf{x}_{0}, \theta_{s})$. This maintains generality and allows NODEs to be used as a component of a larger model. For example, $\mathbf{X}_{0}$ could go through a ResNet before the NODE, and then through a softmax classifier at the end (which is accounted for in the term $\displaystyle  \frac{\partial L}{\partial \mathbf{x}(t_{n})^{T}}$). Introduce the new variable $\mathbf{F}$
\begin{equation}
\label{eqn: F_first_order}
    \mathbf{F} = \int_{t_{0}}^{t_{n}}\dot{\mathbf{x}}(t)dt
    = \int_{t_{0}}^{t_{n}}\left(\dot{\mathbf{x}} + A(t)
    (\dot{\mathbf{x}}-f^{(v)})
    \right)dt + 
    B(\mathbf{x}(t_{0}) - s)
\end{equation}

These are equivalent because $(\dot{\mathbf{x}} - f^{(v)})$ and $(\mathbf{x}(t_{0}) - s)$ are both zero. This means the matrices, $A(t)$ and $B$, can be chosen freely (as long as they are well behaved, finite etc.), to make the computation easier. The gradients of $\mathbf{x}(t_{n})$ with respect to the parameters are
\begin{equation}
\label{eqn: dxdtheta}
    \frac{d\mathbf{x}(t_{n})}{d\theta_{f}} = \frac{d\mathbf{F}}{d\theta_{f}},
    \qquad
    \qquad
    \frac{d\mathbf{x}(t_{n})}{d\theta_{s}} = \frac{d\mathbf{F}}{d\theta_{s}} + \frac{ds(\mathbf{X}_{0}, \theta_{s})}{d\theta_{s}}
\end{equation}

Differentiating $\mathbf{F}$ with respect to a general parameter $\theta$
\begin{equation}
    \frac{d\mathbf{F}}{d\theta} = \int_{t_{0}}^{t_{n}}\frac{d\dot{\mathbf{x}}}{d\theta} dt
    + \int_{t_{0}}^{t{n}}A(t)\left(
    \frac{d\dot{\mathbf{x}}}{d\theta} - \frac{\partial f^{(v)}}{\partial \theta} - \frac{\partial f^{(v)}}{\partial \mathbf{x}^{T}}\frac{d\mathbf{x}}{d\theta}
    \right)dt
    +B\left(
    \frac{d\mathbf{x}(t_{0})}{d\theta} - \frac{ds}{d\theta}
    \right)
\end{equation}

Integrating by parts
\begin{equation}
    \int_{t_{0}}^{t_{n}}A(t)\frac{d\dot{\mathbf{x}}}{d\theta}dt =
    \left[
    A(t)\frac{d\mathbf{x}}{d\theta}
    \right]_{t_{0}}^{t_{n}} - \int_{t_{0}}^{t_{n}}\dot{A}(t)\frac{d\mathbf{x}}{d\theta}dt
\end{equation}

Substituting this in and using $\int_{t_{0}}^{t_{n}}\frac{d\dot{\mathbf{x}}}{d\theta}dt = [\frac{d\mathbf{x}}{d\theta}]^{t_{n}}_{t_{0}}$, gives
\begin{equation}
\label{eqn: dFdthet}
\begin{aligned}
    \frac{d\mathbf{F}}{d\theta} = 
    &\left(
    \frac{d\mathbf{x}}{d\theta} + A(t)\frac{d\mathbf{x}}{d\theta}
    \right)\Biggr\vert_{t_{n}}
    -\left(
    \frac{d\mathbf{x}}{d\theta} + A(t)\frac{d\mathbf{x}}{d\theta}
    \right)\Biggr\vert_{t_{0}}
    - \int_{t_{0}}^{t_{n}}A(t)\frac{\partial f^{(v)}}{\partial \theta}dt\\
    &- \int_{t_{0}}^{t_{n}}\left(
    \dot{A(t)}+A(t)\frac{\partial f^{(v)}}{\partial \mathbf{x}^{T}}
    \right)\frac{d\mathbf{x}}{d\theta}dt
    + B\left(\frac{d\mathbf{x}}{d\theta}\Biggr\vert_{t_{0}} - \frac{ds}{d\theta}
    \right)
\end{aligned}
\end{equation}

Using the freedom of choice of $A(t)$, let it follow the ODE
\begin{equation}
    \dot{A(t)} = -A(t)\frac{\partial f^{(v)}}{\partial \mathbf{x}^{T}},
    \qquad
    \qquad
    A(t_{n}) = -I
\end{equation}

Where $I$ is the identity matrix. Then the first term and second integral in Equation (\ref{eqn: dFdthet}) become zero, yielding
\begin{equation}
\label{eqn: dFdthet2}
    \frac{d\mathbf{F}}{d\theta} = \left(
    B-I - A(t_{0})
    \right)\frac{d\mathbf{x}}{d\theta}\Biggr\vert_{t_{0}}
    +\int_{t_{n}}^{t_{0}}A(t)\frac{\partial f^{(v)}}{\partial \theta}dt
    -B\frac{ds}{d\theta}
\end{equation}

Now using the freedom of choice of $B$, let it obey the equation
\begin{equation}
    B = I +A(t_{0})
\end{equation}

This makes the first term in Equation (\ref{eqn: dFdthet2}) zero. This gives the final form of $\displaystyle  \frac{d\mathbf{F}}{d\theta}$
\begin{equation}
    \frac{d\mathbf{F}}{d\theta} = \int_{t_{n}}^{t_{0}}A(t)\frac{\partial f^{(v)}}{\partial \theta}dt - ( I + A(t_{0}))
    \frac{ds}{d\theta}
\end{equation}

Subbing into Equation (\ref{eqn: dxdtheta}) and using the fact that $f^{(v)}$ has no $\theta_{s}$ dependence and $s$ has no $\theta_{f}$ dependence
\begin{equation}
    \frac{d\mathbf{x}(t_{n})}{d\theta_{f}} = \int_{t_{n}}^{t_{0}} A(t)\frac{\partial f^{(v)}(\mathbf{x}, t, \theta_{f})}{\partial \theta_{f}}dt
    ,\qquad\qquad
    \frac{d\mathbf{x}(t_{n})}{d\theta_{s}} = -A(t_{0})\frac{ds(\mathbf{X}_{0}, \theta_{s})}{d\theta_{s}}
\end{equation}

This leads to the gradients of the loss
\begin{equation}
\label{eqn: dLdtheta2}
    \frac{dL}{d\theta_{f}} = \frac{\partial L}{\partial \mathbf{x}(t_{n})^{T}}\int_{t_{n}}^{t_{0}}
    A(t)\frac{\partial f^{(v)}(\mathbf{x}, t, \theta_{f})}{\partial\theta_{f}}dt
    ,\qquad\qquad
    \frac{dL}{d\theta_{s}} = -\frac{\partial L}{\partial \mathbf{x}(t_{n})^{T}}A(t_{0})
    \frac{ds(\mathbf{X}_{0}, \theta_{s})}{d\theta_{s}}
\end{equation}

Subject to the ODE for $A(t)$
\begin{equation}
\label{eqn: ode_lambda_first}
    \dot{A}(t) = -A(t)\frac{\partial f^{(v)}(\mathbf{x}, t, \theta_{f})}{\partial \mathbf{x}},
    \qquad
    \qquad
    A(t_{n}) = -I
\end{equation}

Now introduce the adjoint state $\mathbf{r}(t)$
\begin{equation}
\label{eqn: introduce_adj}
    \mathbf{r}(t) = -A(t)^{T}\frac{\partial L}{\partial \mathbf{x}(t_{n})},
    \qquad
    \qquad
    \mathbf{r}(t)^{T} = -\frac{\partial L}{\partial \mathbf{x}(t_{n})^{T}}A(t)
\end{equation}

Using the fact that $\displaystyle  \frac{\partial L}{\partial \mathbf{x}(t_{n})}$ is constant with respect to time, the adjoint equations are obtained by applying the definition of the adjoint in Equation (\ref{eqn: introduce_adj}), to the gradients in Equation (\ref{eqn: dLdtheta2}), and multiplying the ODE in Equation (\ref{eqn: ode_lambda_first}) by the constant $-\frac{\partial L}{\partial \mathbf{x}(t_{n})}$
\begin{equation}
\label{eqn: first_order_adjoint_method_grads}
    \frac{dL}{d\theta_{f}} = -\int_{t_{n}}^{t_{0}}\mathbf{r}(t)^{T}\frac{\partial f^{(v)}(\mathbf{x}, t, \theta_{f})}{\partial \theta_{f}}dt,
    \qquad
    \qquad
    \frac{dL}{d\theta_{s}}= \mathbf{r}(t_{0})^{T}\frac{ds(\mathbf{X}_{0}, \theta_{s})}{d\theta_{s}} \\
\end{equation}
Where the adjoint $\mathbf{a}(t)$ follows the ODE
\begin{equation}
\label{eqn: first_order_adjoint_method_ode}
    \dot{\mathbf{r}}(t) = -\mathbf{r}(t)^{T}\frac{\partial f^{(v)}(\mathbf{x}, t, \theta_{f})}{\partial \mathbf{x}},
    \qquad
    \qquad
    \mathbf{r}(t_{n}) = \frac{\partial L}{\partial \mathbf{x}(t_{n})}
\end{equation}

The gradients are found by integrating the adjoint state, $\mathbf{r}$, and the real state, $\mathbf{x}$, backwards in time, which requires no intermediate values to be stored, using constant memory, a major benefit over traditional backpropagation.

These are the same equations that were derived by \citet{chen2018neural}, however this includes the addition of letting $\mathbf{x}(t_{0}) = s(\mathbf{X}_{0}, \theta_{s})$ giving the corresponding gradient, $\displaystyle  \frac{dL}{d\theta_{s}}$. Additionally, the derivation used by \citet{chen2018neural} is simpler but does not present an obvious way to extend the adjoint method to second order ODEs, which this derivation method can do, as shown next.

\subsection{Second Order Adjoint}

Using the same derivation method, but with a second order differential equation, a second order adjoint method is derived, according to the proposition from the main text:

\textbf{Proposition \ref{prop: second_order_adjoint}.}\textit{
The adjoint state $\mathbf{r}(t)$ of SONODEs follows the second order ODE
\begin{equation}
\begin{aligned}
    \ddot{\mathbf{r}} &= \mathbf{r}^{T}\frac{\partial f^{(a)}}{\partial \mathbf{x}}
    -\dot{\mathbf{r}}^{T}\frac{\partial f^{(a)}}{\partial \dot{\mathbf{x}}}
    -\mathbf{r}^{T}\frac{d}{dt}\Biggr(\frac{\partial f^{(a)}}{\partial \dot{\mathbf{x}}}\Biggr)
\end{aligned}
\end{equation}
and the gradients of the loss with respect to the parameters of the acceleration, $\theta_{f}$ are 
\begin{equation}
    \frac{dL}{d\theta_{f}} = -\int_{t_{n}}^{t_{0}}\mathbf{r}^{T}\frac{\partial f^{(a)}}{\partial \theta_{f}}dt,
\end{equation}
}

\begin{proof}
In general, the loss function, $L$, depends on $\mathbf{x}$ and $\dot{\mathbf{x}}$
\begin{equation}
\label{eqn: dLdthet_second_order_1}
    \frac{dL}{d\theta} = \frac{\partial L}{\partial \mathbf{x}(t_{n})^{T}}
    \frac{d\mathbf{x}(t_{n})}{d\theta}
    + \frac{\partial L}{\partial \dot{\mathbf{x}}(t_{n})^{T}}\frac{d\dot{\mathbf{x}}(t_{n})}{d\theta}
\end{equation}

The gradients from the positional part and the velocity part are found separately and added. Firstly the position
\begin{equation}
    \mathbf{x}(t_{n}) = \int_{t_{0}}^{t_{n}}\dot{\mathbf{x}}(t) dt + \mathbf{x}(t_{0})
\end{equation}
Subject to the second order ODE
\begin{equation}
    \ddot{\mathbf{x}} = f^{(a)}(\mathbf{x}, \dot{\mathbf{x}}, t, \theta_{f})
    ,\qquad
    \mathbf{x}(t_{0}) = s(\mathbf{X}_{0}, \theta_{s})
    ,\qquad
    \dot{\mathbf{x}}(t_{0}) = g(\mathbf{x}(t_{0}), \theta_{g})
\end{equation}

Following the same procedure as in first order, but including the initial condition for the velocity as well
\begin{equation}
\label{eqn: F_second_order}
    \mathbf{F} = \int_{t_{0}}^{t_{n}}\dot{\mathbf{x}} + A(t)(\ddot{\mathbf{x}} - f^{(a)})dt
    +B(\dot{\mathbf{x}}(t_{0}) - g) + C(\mathbf{x}(t_{0}) - s)
\end{equation}

As before, the vectors, $(\ddot{\mathbf{x}} - f^{(a)})$, $(\dot{\mathbf{x}}(t_{0}) - g)$ and $(\mathbf{x}(t_{0}) - s)$ are zero, which gives freedom to choose the matrices $A(t)$, $B$ and $C$ to make the calculation easier. The gradients of $\mathbf{x}(t_{n})$ with respect to the parameters $\theta$ are
\begin{equation}
\label{eqn: dxdtheta_second__order_1}
    \frac{d\mathbf{x}(t_{n})}{d\theta_{f}} = \frac{d\mathbf{F}}{d\theta_{f}},
    \qquad\qquad
    \frac{d\mathbf{x}(t_{n})}{d\theta_{g}} = \frac{d\mathbf{F}}{d\theta_{g}},
    \qquad\qquad
    \frac{d\mathbf{x}(t_{n})}{d\theta_{s}} = \frac{d\mathbf{F}}{d\theta_{s}} + \frac{ds(\mathbf{X}_{0}, \theta_{s})}{d\theta_{s}}
\end{equation}

Differentiating $F$ from equation \ref{eqn: F_second_order} with respect to a general parameter
\begin{equation}
\label{eqn: dFdtheta_second_order_1}
\begin{aligned}
    \frac{d\mathbf{F}}{d\theta} = & \left[ \frac{d\mathbf{x}}{d\theta} \right]_{t_{0}}^{t_{n}}  - \int_{t_{0}}^{t_{n}}A(t)\frac{\partial f^{(a)}}{\partial \theta}dt
    + \int_{t_{0}}^{t_{n}}A(t)\left(
    \frac{d\ddot{\mathbf{x}}}{d\theta}
    - \frac{\partial f^{(a)}}{\partial \mathbf{x}^{T}}\frac{d\mathbf{x}}{d\theta} - 
    \frac{\partial f^{(a)}}{\partial \dot{\mathbf{x}}^{T}}\frac{d\dot{\mathbf{x}}}{d\theta}
    \right)dt \\
    &+B\left(
    \frac{d\dot{\mathbf{x}}}{d\theta}\Biggr\vert_{t_{0}}- \frac{\partial g}{\partial \theta} - \frac{\partial g}{\partial \mathbf{x}(t_{0})^{T}}\frac{d\mathbf{x}(t_{0})}{d\theta}
    \right) 
    + C\left(
    \frac{d\mathbf{x}}{d\theta}\Biggr\vert_{t_{0}} - \frac{ds}{d\theta}
    \right)
\end{aligned}
\end{equation}

Integrating by parts
\begin{equation}
    \int_{t_{0}}^{t_{n}}A(t) \frac{d\ddot{\mathbf{x}}}{d\theta}dt = 
    \left[
    A(t)\frac{d\dot{\mathbf{x}}}{d\theta}-\dot{A}(t)
    \frac{d\mathbf{x}}{d\theta}
    \right]_{t_{0}}^{t_{n}}
    +\int_{t_{0}}^{t_{n}} \ddot{A}(t)
    \frac{d\mathbf{x}}{d\theta}dt    
\end{equation}

\begin{equation}
    \int_{t_{0}}^{t_{n}}A(t)\frac{\partial f^{(a)}}{\partial \dot{\mathbf{x}}^{T}}\frac{d\dot{\mathbf{x}}}{d\theta}dt =
    \left[
    A(t)\frac{\partial f^{(a)}}{\partial \dot{\mathbf{x}}^{T}}
    \frac{d\mathbf{x}}{d\theta}
    \right]_{t_{0}}^{t_{n}}
    - \int_{t_{0}}^{t_{n}}\frac{d}{dt}\left(
    A(t)\frac{\partial f^{(a)}}{\partial \dot{\mathbf{x}}^{T}}
    \right)\frac{d\mathbf{x}}{d\theta}dt
\end{equation}

Subbing these into Equation (\ref{eqn: dFdtheta_second_order_1})
\begin{equation}
\label{eqn: dF_dtheta_second_order_2}
\begin{aligned}
    \frac{d\mathbf{F}}{d\theta} = 
    &\left[
    \left( 
    I 
    -\dot{A}
    - A\frac{\partial f^{(a)}}{\partial \dot{\mathbf{x}}^{T}}
    \right)\frac{d\mathbf{x}}{d\theta}
    +A\frac{d\dot{\mathbf{x}}}{d\theta}
    \right]_{t_{n}}
    -\left[
    \left( 
    I 
    -\dot{A}
    - A\frac{\partial f^{(a)}}{\partial \dot{\mathbf{x}}}
    \right)\frac{d\mathbf{x}}{d\theta}
    +A\frac{d\dot{\mathbf{x}}}{d\theta}
    \right]_{t_{0}}\\
    & +\int_{t_{0}}^{t_{n}}\left(
    \ddot{A}(t) - A(t)\frac{\partial f^{(a)}}{\partial \mathbf{x}^{T}}
    +\frac{d}{dt}\left(
    A(t)\frac{\partial f^{(a)}}{\partial \dot{\mathbf{x}}^{T}}
    \right)
    \right)\frac{d\mathbf{x}}{d\theta}dt
    +\int_{t_{n}}^{t_{0}} A(t)\frac{\partial f^{(a)}}{\partial \theta}dt \\
    &+B\left(
    \frac{d\dot{\mathbf{x}}}{d\theta}\Biggr\vert_{t_{0}}- \frac{\partial g}{\partial \theta} - \frac{\partial g}{\partial \mathbf{x}(t_{0})^{T}}\frac{d\mathbf{x}(t_{0})}{d\theta}
    \right)
    + C\left(
    \frac{d\mathbf{x}}{d\theta}\Biggr\vert_{t_{0}} - \frac{ds}{d\theta}
    \right)
\end{aligned}
\end{equation}

Using the freedom to choose $A(t)$, let it follow the second order ODE
\begin{equation}
\label{eqn: lambda_ode_second_order}
    \ddot{A}(t) = A(t)\frac{\partial f^{(a)}}{\partial \mathbf{x}^{T}} - \frac{d}{dt}\left(
    A(t)\frac{\partial f^{(a)}}{\partial \dot{\mathbf{x}}^{T}}
    \right)
    ,\qquad
    A(t_{n}) = 0
    ,\qquad
    \dot{A}(t_{n}) = I
\end{equation}

This makes the first term and first integral in Equation (\ref{eqn: dF_dtheta_second_order_2}) zero, yielding 
\begin{equation}
\label{eqn: dFdtheta_second_order_3}
\begin{aligned}
    \frac{d\mathbf{F}}{d\theta} = 
    &\int_{t_{n}}^{t_{0}}A(t)\frac{\partial f^{(a)}}{\partial \theta}dt
    +\Biggr(\Biggr(
    \dot{A}(t) + A(t)\frac{\partial f^{(a)}}{\partial \dot{\mathbf{x}}^{T}} - I
    - B
    \frac{\partial g}{\partial \mathbf{x}(t_{0})^{T}}
    + C
    \Biggr)\frac{d\mathbf{x}}{d\theta}
    \Biggr)\Biggr\vert_{t_{0}} \\
    & +\left(\left(
    B - A\right)\frac{d\dot{\mathbf{x}}}{d\theta}
    \right)\Biggr\vert_{t_{0}}
    - B\frac{\partial g}{\partial \theta}
    - C\frac{d s}{d \theta}
\end{aligned}
\end{equation}

Now using the freedom of choice in $B$ and $C$
\begin{equation}
    B = A(t_{0}),
    \qquad\qquad
    C = -\dot{A}(t_{0}) -A(t_{0})\frac{\partial f^{(a)}}{\partial \dot{\mathbf{x}}}
    \Biggr\vert_{t_{0}}
    +I
    + A(t_{0})\frac{\partial g}{\partial \mathbf{x}(t_{0})^{T}}
\end{equation}

This makes the second and third terms in Equation (\ref{eqn: dFdtheta_second_order_3}) zero, yielding
\begin{equation}
    \frac{d\mathbf{F}}{d\theta} = 
    \int_{t_{n}}^{t_{0}}A(t)\frac{\partial f^{(a)}}{\partial \theta}dt
    -B\frac{\partial g}{\partial \theta}
    - C\frac{d s}{d \theta}
\end{equation}

These give the final gradients of $\mathbf{x}(t_{n})$ with respect to the parameters, by subbing the results for $B$, $C$ and $\displaystyle  \frac{d\mathbf{F}}{d\theta}$ above into Equation (\ref{eqn: dxdtheta_second__order_1}), using the fact that $f^{(a)}$, $g$ and $s$ only depend on the parameters $\theta_{f}$, $\theta_{g}$ and $\theta_{s}$ respectively
\begin{equation}
\begin{aligned}
    \frac{d\mathbf{x}(t_{n})}{d\theta_{f}} &= \int_{t_{n}}^{t_{0}}A(t)
    \frac{\partial f^{(a)}}{\partial \theta_{f}}dt
    ,\qquad\qquad
    \frac{d\mathbf{x}(t_{n})}{d\theta_{g}} = -A(t_{0})\frac{\partial g}{\partial \theta_{g}}
    \\
    \frac{d\mathbf{x}(t_{n})}{d\theta_{s}} &=  \left(
    \dot{A}(t_{0}) 
    + A(t_{0})
    \left(
    \frac{\partial f^{(a)}}{\partial \dot{\mathbf{x}}^{T}}\Biggr\vert_{t_{0}}
    - \frac{\partial g}{\partial \mathbf{x}(t_{0})^{T}}
    \right)
    \right)\frac{ds}{d\theta_{s}}
\end{aligned}
\end{equation}

As before, introduce the adjoint state $\mathbf{r}^{x}(t)$:
\begin{equation}
    \mathbf{r}^{x}(t) = -A(t)^{T}\frac{\partial L}{\partial \mathbf{x}(t_{n})},
    \qquad\qquad
    \mathbf{r}^{x}(t)^{T} = -\frac{\partial L}{\partial \mathbf{x}(t_{n})^{T}}A(t)
\end{equation}

Using the fact that $\displaystyle  \frac{\partial L}{\partial \mathbf{x}(t_{n})}$ is constant with respect to time, all the results above, and the ODE and initial conditions for $A(t)$ in Equation (\ref{eqn: lambda_ode_second_order}) can be multiplied by $\displaystyle -\frac{\partial L}{\partial \mathbf{x}(t_{n})^{T}}$, to get the gradients $\displaystyle  \frac{dL}{d\theta}$ in terms of $\mathbf{r}^{x}(t)$
\begin{equation}
\label{eqn: gradients_second_order_positional}
\begin{aligned}
    \frac{dL}{d\theta_{f}} &= -\int_{t_{n}}^{t_{0}}\mathbf{r}^{x}(t)^{T}
    \frac{\partial f^{(a)}}{\partial \theta_{f}}dt
    ,\qquad\qquad
    \frac{dL}{d\theta_{g}} = \mathbf{r}^{x}(t_{0})^{T}\frac{\partial g}{\partial \theta_{g}}
    \\
    \frac{dL}{d\theta_{s}} &=  \left(
    -\dot{\mathbf{r}}^{x}(t_{0})^{T} 
    - \mathbf{r}^{x}(t_{0})^{T}
    \left(
    \frac{\partial f^{(a)}}{\partial \dot{\mathbf{x}}^{T}}\Biggr\vert_{t_{0}}
    - \frac{\partial g}{\partial \mathbf{x}(t_{0})^{T}}
    \right)
    \right)\frac{d\mathbf{x}(t_{0})}{d\theta_{s}}
\end{aligned}
\end{equation}

Subject to the second order ODE for $\mathbf{r}^{x}(t)$
\begin{equation}
\label{eqn: d2adjdt2_position}
    \ddot{\mathbf{r}}^{x}(t) = \mathbf{r}^{x}(t)^{T}\frac{\partial f^{(a)}}{\partial \mathbf{x}} - \frac{d}{dt}\left(
    \mathbf{r}^{x}(t)^{T}\frac{\partial f^{(a)}}{\partial \dot{\mathbf{x}}}
    \right)
    ,\qquad
    \mathbf{r}^{x}(t_{n}) = 0
    ,\qquad
    \dot{\mathbf{r}}^{x}(t_{n}) = -\frac{\partial L}{\partial \mathbf{x}(t_{n})}
\end{equation}

Where after differentiating with the product rule the ODE in Equation (\ref{eqn: d2adjdt2_position}) becomes
\begin{equation}
\label{eqn: d2adjdt2_position_product_rule}
    \ddot{\mathbf{r}}^{x}(t) = \mathbf{r}^{x}(t)^{T}\frac{\partial f^{(a)}}{\partial \mathbf{x}}
    -\dot{\mathbf{r}}^{x}(t)^{T}\frac{\partial f^{(a)}}{\partial \dot{\mathbf{x}}}
    - \mathbf{r}^{x}(t)^{T}
    \left(
    \frac{d}{dt}\frac{\partial f^{(a)}}{\partial \dot{\mathbf{x}}}
    \right)
\end{equation}

Where doing the full time derivative gives
\begin{equation}
\label{eqn: full_time_derivative_f(a)}
    \frac{d}{dt}\Biggr(\frac{\partial f^{(a)}}{\partial \dot{\mathbf{x}}}\Biggr)=
    [\dot{\mathbf{x}}^{T}, f^{(a)T}, 1]
    \begin{bmatrix}
    \partial_{\mathbf{x}}
    \\
    \partial_{\dot{\mathbf{x}}}
    \\
    \partial_{t}
    \end{bmatrix}
    \Biggr(\frac{\partial f^{(a)}}{\partial \dot{\mathbf{x}}}\Biggr)
\end{equation}

Where the fact that $\ddot{x} = f^{(a)}$ has been used. This is only when the loss depends on the position. The same method is used to look at the velocity part in Equation (\ref{eqn: dLdthet_second_order_1})
\begin{equation}
\frac{dL}{d\theta} = \frac{\partial L}{\partial \dot{\mathbf{x}}(t_{n})^{T}}
\frac{d\dot{\mathbf{x}}(t_{n})}{d\theta}
\end{equation}

Where
\begin{equation}
\dot{\mathbf{x}}(t_{n}) = \int_{t_{0}}^{t_{n}}\ddot{\mathbf{x}}(t)dt + \dot{\mathbf{x}}(t_{0})
\end{equation}

The general method is to take this expression and add zeros, in the form of $A(t)$, $B$ and $C$ multiplied by the ODE and initial conditions, $(\ddot{\mathbf{x}}-f^{(a)})$, $(\dot{\mathbf{x}}(t_{0})-g)$ and $(\mathbf{x}(t_{0})-s)$. Then differentiate with respect to a general parameter $\theta$ and integrate by parts to get any integrals containing $\displaystyle  \frac{d\dot{\mathbf{x}}}{d\theta}$ or $\displaystyle  \frac{d\ddot{\mathbf{x}}}{d\theta}$ in terms of $\displaystyle  \frac{d\mathbf{x}}{d\theta}$. Then choose the ODE for $A(t)$ to remove any $\displaystyle  \frac{d\mathbf{x}}{d\theta}$ terms in the integral, and the initial conditions of $A(t_{n})$ to remove the boundary terms at $t_{n}$. Then $B$ and $C$ are chosen to remove the boundary terms at $t_{0}$. After doing this the gradients of $\dot{\mathbf{x}}$ with respect to the parameters are
\begin{equation}
\begin{aligned}
&\frac{d\dot{\mathbf{x}}(t_{n})}{d\theta_{f}}  = \int_{t_{n}}^{t_{0}}A(t)\frac{\partial f^{(a)}}{\partial \theta_{f}}dt
,\qquad\qquad
\frac{d\dot{\mathbf{x}}(t_{n})}{d\theta_{g}}  = -A(t_{0})
\frac{\partial g}{\partial \theta_{g}}
\\
&\frac{d\dot{\mathbf{x}}(t_{n})}{d\theta_{s}}  = \Biggr(
\dot{A}(t_{0}) + A(t_{0})
\frac{\partial f^{(a)}}{\partial \dot{\mathbf{x}}^{T}}\Biggr\vert_{t_{0}} 
- A(t_{0})\frac{\partial g}{\partial \mathbf{x}(t_{0})^{T}}
\Biggr)\frac{ds}{d\theta_{s}}
\end{aligned}
\end{equation}

Subject to the second order ODE for $A(t)$
\begin{equation}
\ddot{A}(t) = A(t)\frac{\partial f^{(a)}}{\partial \mathbf{x}^{T}} - \frac{d}{dt}\left(
    A(t)\frac{\partial f^{(a)}}{\partial \dot{\mathbf{x}}^{T}}
    \right) 
,\qquad
A(t_{n}) = -I
,\qquad
\dot{A}(t_{n}) = \frac{\partial f^{(a)}}{\partial \dot{\mathbf{x}}^{T}}\Biggr\vert_{t_{n}}
\end{equation}

Now introduce the state $\mathbf{r}^{v}(t)$
\begin{equation}
\mathbf{r}^{v}(t) = -\frac{\partial L}{\partial \dot{\mathbf{x}}(t_{n})^{T}}A(t),
\qquad\qquad
\mathbf{r}^{v}(t) = -A(t)^{T}\frac{\partial L}{\partial \dot{\mathbf{x}}(t_{n})}
\end{equation}

Which allows the gradients of the loss with respect to the parameters to be written as
\begin{equation}
\label{eqn: gradients_second_order_velocity}
\begin{aligned}
&\frac{dL}{d\theta_{f}} = -\int_{t_{n}}^{t_{0}}\mathbf{r}^{v}(t)^{T}\frac{\partial f^{(a)}}{\partial \theta_{f}}dt
,\qquad\qquad
\frac{dL}{d\theta_{g}} = \mathbf{r}^{v}(t_{0})^{T}\frac{\partial g}{\partial \theta_{g}}
\\
&\frac{dL}{d\theta_{s}} = \left(
\mathbf{r}^{v}(t_{0})^{T}\frac{\partial g}{\partial \mathbf{x}(t_{0})^{T}}
- \dot{\mathbf{r}}^{v}(t_{0})^{T} - \mathbf{r}^{v}(t_{0})^{T}
\frac{\partial f^{(a)}}{\partial \dot{\mathbf{x}}^{T}}\Biggr\vert_{t_{0}}
\right)\frac{ds}{d\theta_{s}}
\end{aligned}
\end{equation}

Where $\mathbf{r}^{v}$ follows the second order ODE and initial conditions
\begin{equation}
\label{eqn: d2adjdt2_velocity}
\begin{aligned}
&\ddot{\mathbf{r}}^{v}(t) = \mathbf{r}^{v}(t)^{T}\frac{\partial f^{(a)}}{\partial \mathbf{x}}
    -\dot{\mathbf{r}}^{v}(t)^{T}\frac{\partial f^{(a)}}{\partial \dot{\mathbf{x}}}
    - \mathbf{r}^{v}(t)^{T}\frac{d}{dt}
    \left(
    \frac{\partial f^{(a)}}{\partial \dot{\mathbf{x}}}
    \right)
    \\
    &\mathbf{r}^{v}(t_{n}) = \frac{\partial L}{\partial \dot{\mathbf{x}}(t_{n})}
    ,\qquad\qquad
    \dot{\mathbf{r}}^{v}(t_{n}) = -\frac{\partial L}{\partial \dot{\mathbf{x}}(t_{n})^{T}}
\frac{\partial f^{(a)}}{\partial \dot{\mathbf{x}}}\Biggr\vert_{t_{n}}
\end{aligned}
\end{equation}

Now adding the gradients from the $\mathbf{x}$ dependence and the $\dot{\mathbf{x}}$ dependence together. It can be seen that the gradients are the same in Equations (\ref{eqn: gradients_second_order_positional}) and (\ref{eqn: gradients_second_order_velocity}), but just swapping $\mathbf{r}^{x}$ and $\mathbf{r}^{v}$. Additionally, it can be seen from the ODEs for $\mathbf{r}^{x}$ and $\mathbf{r}^{v}$ in Equations (\ref{eqn: d2adjdt2_position_product_rule}) and (\ref{eqn: d2adjdt2_velocity}), that they are governed by the same, linear, second order ODE, with different initial conditions. Therefore the gradients, $\displaystyle  \frac{dL}{d\theta}$, can be written in terms of a new adjoint state, $\mathbf{r} = \mathbf{r}^{x}+\mathbf{r}^{v}$
\begin{equation}
\label{eqn: gradients_from_second_order_method_final}
\begin{aligned}
&\frac{dL}{d\theta_{f}} = -\int_{t_{n}}^{t_{0}}\mathbf{r}(t)^{T}\frac{\partial f^{(a)}(\mathbf{x}, \dot{\mathbf{x}}, t, \theta_{f})}{\partial \theta_{f}}dt
,\qquad\qquad
\frac{dL}{d\theta_{g}} = \mathbf{r}(t_{0})^{T}\frac{\partial g(\mathbf{x}(t_{0}), \theta_{g})}{\partial \theta_{g}}
\\
&\frac{dL}{d\theta_{s}} = \left(
\mathbf{r}(t_{0})^{T}\frac{\partial g(\mathbf{x}(t_{0}), \theta_{g})}{\partial \mathbf{x}(t_{0})^{T}}
- \dot{\mathbf{r}}(t_{0})^{T} -\mathbf{r}(t_{0})^{T}
\frac{\partial f^{(a)}(\mathbf{x}, \dot{\mathbf{x}}, t, \theta_{f})}{\partial \dot{\mathbf{x}}^{T}}\Biggr\vert_{t_{0}}
\right)\frac{ds(\mathbf{X}_{0}, \theta_{s})}{d\theta_{s}}
\end{aligned}
\end{equation}

Where $\mathbf{a}$ follows the second order ODE with initial conditions
\begin{equation}
\label{eqn: adjoint_ode_initial_conditions_second_order_final}
\begin{aligned}
    &\ddot{\mathbf{r}}(t) = \mathbf{r}(t)^{T}\frac{\partial f^{(a)}(\mathbf{x}, \dot{\mathbf{x}}, t, \theta_{f})}{\partial \mathbf{x}}
    -\dot{\mathbf{r}}(t)\frac{\partial f^{(a)}(\mathbf{x}, \dot{\mathbf{x}}, t, \theta_{f})}{\partial \dot{\mathbf{x}}}
    - \mathbf{r}(t)^{T}\frac{d}{dt}\left(
    \frac{\partial f^{(a)}(\mathbf{x}, \dot{\mathbf{x}}, t, \theta_{f})}{\partial \dot{\mathbf{x}}}
    \right)
    \\
    &\mathbf{r}(t_{n}) = \frac{\partial L}{\partial \dot{\mathbf{x}}(t_{n})}
    ,\qquad\qquad
    \dot{\mathbf{r}}(t_{n}) = -\frac{\partial L}{\partial \mathbf{x}(t_{n})} -\frac{\partial L}{\partial \dot{\mathbf{x}}(t_{n})^{T}}
    \frac{\partial f^{(a)}(\mathbf{x}, \dot{\mathbf{x}}, t, \theta_{f})}{\partial \dot{\mathbf{x}}}\Biggr\vert_{t_{n}}
\end{aligned}
\end{equation}

The full derivative, $d_{t}(\partial_{\dot{\mathbf{x}}}f^{(a)})$, is given by Equation (\ref{eqn: full_time_derivative_f(a)}). The ODE can also be written compactly as
\begin{equation}
\label{eqn: compact_ode_second_order_adjoint}
    \ddot{\mathbf{r}}(t) = \mathbf{r}(t)^{T}\frac{\partial f^{(a)}(\mathbf{x}, \dot{\mathbf{x}}, t, \theta_{f})}{\partial \mathbf{x}} - \frac{d}{dt}\left(
    \mathbf{r}(t)^{T}\frac{\partial f^{(a)}(\mathbf{x}, \dot{\mathbf{x}}, t, \theta_{f})}{\partial \dot{\mathbf{x}}}
    \right) 
\end{equation}

Just as in the first order method, a sum over times stamps $t_{n}$ may be required. This matches and extends on the gradients and ODE given by proposition \ref{prop: second_order_adjoint}.
\end{proof}

\subsection{Equivalence between the two Adjoint methods}

When acting on a concatenated state, $\mathbf{z}(t) = [\mathbf{x}(t), \mathbf{v}(t)]$, the first order adjoint method will produce the same gradients as the second order adjoint method. However, it is more computationally efficient to use the first order method. This is also given in the main text as the following proposition:

\textbf{Proposition \ref{prop: adjoints_are_equivalent}.}\textit{
The gradient of $\theta_f$ computed through the adjoint of the coupled ODE from (\ref{eq:sonode_coupled}) and the gradient from (\ref{eq:sonode_grad}) are equivalent. However, the latter requires at least as many matrix multiplications as the former. }

Intuitively, the first order method will produce the same gradients because second order dynamics can be thought of as two coupled first order ODEs, where the first order dynamics happen in phase space. However, this provides no information about computational efficiency. We prove the equivalence and compare the computational efficiencies below.

\begin{proof}
The first order formulation of second order dynamics can be written as
\begin{equation}
    \mathbf{z}(t) = \begin{bmatrix}
           \mathbf{x}(t) \\
           \mathbf{v}(t) \\
         \end{bmatrix},
        \qquad
        \quad
    \dot{\mathbf{z}} = \begin{bmatrix}
        \mathbf{v} \\
        f^{(a)}(\mathbf{x}, \mathbf{v}, t, \theta_{f}) \\
     \end{bmatrix},
     \qquad
     \quad
     \mathbf{z}(t_{0}) = \begin{bmatrix}
           \mathbf{x}(t_{0}) \\
           \mathbf{v}(t_{0}) \\
         \end{bmatrix}
         = \begin{bmatrix}
           s(\mathbf{X}_{0}, \theta_{s}) \\
           g(s(\mathbf{X}_{0}, \theta_{s}), \theta_{g}) \\
         \end{bmatrix}
\end{equation}

When using index notation, $x_{i}$ and $v_{i}$ are concatenated to make $z_{i.}$. For $x_{i}$ and $v_{i}$, the index, i, ranges from 1 to $d$, whereas for $z_{i}$ it ranges from 1 to 2$d$ accounting for the concatenation. This is represented below
\begin{equation}
\label{eqn: z_as_x_v}
    z_{i}= 
\begin{dcases}
    x_{i},& \text{if } i\leq d\\
    v_{(i-d)},              & \text{if } i\geq d+1 \\
\end{dcases}
\end{equation}

It also extends to $\dot{z}_{i}$ and $z_{i}(t_{0})$, where $f^{(a)}_{i}$, $s_{i}$ and $g_{i}$ also have the index range from 1 to $d$, but the index of $\dot{z}_{i}$ goes from 1 to 2$d$ just like for $z_{i}$.
\begin{equation}
\label{eqn: z_dot_as_v_f}
    \dot{z}_{i}= \tilde{f}^{(v)}_{i}(\mathbf{z}, t, \tilde{\theta}_{f})=
\begin{dcases}
    v_{i},& \text{if } i\leq d\\
    f^{(a)}_{(i-d)}(\mathbf{x}, \mathbf{v}, t, \theta_{f}),              & \text{if } i\geq d+1 \\
\end{dcases}
\end{equation}

\begin{equation}
\label{eqn: z0_as_s_g}
    z_{i}(t_{0})= \tilde{s}_{i}(\mathbf{X}_{0}, \tilde{\theta}_{s})=
\begin{dcases}
    s_{i}(\mathbf{X}_{0}, \theta_{s}),& \text{if } i\leq d\\
    g_{(i-d)}(s(\mathbf{X}_{0}, \theta_{s}),\theta_{g}),              & \text{if } i\geq d+1 \\
\end{dcases}
\end{equation}

Using the first order adjoint method, Equations (\ref{eqn: first_order_adjoint_method_grads}) and (\ref{eqn: first_order_adjoint_method_ode}), and using index notation with repeated indices summed over, the gradients are
\begin{equation}
    \frac{dL}{d\tilde{\theta}_{f}} = -\int_{t{n}}^{t_{0}}r_{i}(t)\frac{\partial \tilde{f}^{(v)}_{i}(\mathbf{z}, t, \tilde{\theta}_{f})}{\partial \tilde{\theta}_{f}}dt,
    \qquad\qquad
    \frac{dL}{d\tilde{\theta}_{s}} = r_{i}(t_{0})
    \frac{d\tilde{s}_{i}(\mathbf{X}_{0}, \tilde{\theta}_{s})}{d\tilde{\theta}_{s}}
\end{equation}

Where the adjoint follows the ODE
\begin{equation}
    \dot{r}_{i}(t) = -r_{j}(t)\frac{\partial \tilde{f}^{(v)}_{j}(\mathbf{z}, t, \tilde{\theta}_{f})}{\partial z_{i}},
    \qquad\qquad
    r_{i}(t_{n}) = \frac{\partial L}{\partial z_{i}(t_{n})}
\end{equation}

Where just like in $z_{i}$, the index, i, ranges from 1 to 2$d$ in the adjoint $r_{i}(t)$. When writing the sum over the index explicitly
\begin{equation}
\label{eqn: explicit_sum_for_a_gradient}
\dot{r}_{i} = -\sum_{j = 1}^{2d}r_{j}\frac{\partial \tilde{f}^{(v)}_{j}}{\partial z_{i}}
\qquad
= -\sum_{j = 1}^{d}r_{j}\frac{\partial \tilde{f}^{(v)}_{j}}{\partial z_{i}}
-\sum_{j = d+1}^{2d}r_{j}\frac{\partial \tilde{f}^{(v)}_{j}}{\partial z_{i}}
\end{equation}

Now split up the adjoint state, $\mathbf{r}$, into two equally sized vectors, $\mathbf{r}^{A}$ and $\mathbf{r}^{B}$, where their indices only range from 1 to $d$, like $\mathbf{x}$, $\mathbf{v}$, $f^{(a)}$, $g$ and $s$.
\begin{equation}
\label{eqn: a_as_aa_ab}
    r_{i}= 
\begin{dcases}
    r^{A}_{i},         & \text{if } i\leq d\\
    r^{B}_{(i-d)},     & \text{if } i\geq d+1 \\
\end{dcases}
\end{equation}

Using Equations (\ref{eqn: z_as_x_v}), (\ref{eqn: z_dot_as_v_f}), (\ref{eqn: z0_as_s_g}) and (\ref{eqn: a_as_aa_ab}), and subbing them into Equation (\ref{eqn: explicit_sum_for_a_gradient}), the derivative can be written as
\begin{equation}
\dot{r}_{i} = -\sum_{j = 1}^{d}r^{A}_{j}\frac{\partial v_{j}}{\partial z_{i}}
-\sum_{j = d+1}^{2d}r^{B}_{(j-d)}\frac{\partial f^{(a)}_{(j-d)}}{\partial z_{i}}
\end{equation}

Relabelling the indices in the second sum $(j-d) \xrightarrow{}j$
\begin{equation}
\dot{r}_{i} = -\sum_{j = 1}^{d}r^{A}_{j}\frac{\partial v_{j}}{\partial z_{i}}
-\sum_{j = 1}^{d}r^{B}_{j}\frac{\partial f^{(a)}_{j}}{\partial z_{i}}
\end{equation}

Looking at specific values of i:

$i \leq d$
\begin{equation}
\dot{r}_{i} = \dot{r}^{A}_{i} = -\sum_{j = 1}^{d}r^{A}_{j}\frac{\partial v_{j}}{\partial x_{i}}
-\sum_{j = 1}^{d}r^{B}_{j}\frac{\partial f^{(a)}_{j}}{\partial x_{i}}
,\qquad
= -\sum_{j = 1}^{d}r^{B}_{j}\frac{\partial f^{(a)}_{j}}{\partial x_{i}}
\end{equation}

$i \geq d+1$
\begin{equation}
\dot{r}_{i}=
\dot{r}^{B}_{(i-d)} = -\sum_{j = 1}^{d}r^{A}_{j}\frac{\partial v_{j}}{\partial v_{(i-d)}}
-\sum_{j = 1}^{d}r^{B}_{j}\frac{\partial f^{(a)}_{j}}{\partial v_{(i-d)}}
\end{equation}

Relabelling the first index $(i-d) \xrightarrow{} i$
\begin{equation}
\dot{r}^{B}_{i} = -\sum_{j = 1}^{d}r^{A}_{j}\frac{\partial v_{j}}{\partial v_{i}}
-\sum_{j = 1}^{d}r^{B}_{j}\frac{\partial f^{(a)}_{j}}{\partial v_{i}}
\end{equation}

Noting that, 
$\displaystyle  \frac{\partial v_{j}}{\partial v_{i}} = \delta_{ij}$, the time derivatives can be written in vector matrix notation as
\begin{equation}
    \dot{\mathbf{r}}^{A}(t) = -\mathbf{r}^{B}(t)^{T}\frac{\partial f^{(a)}(\mathbf{x}, \mathbf{v}, t, \theta_{f})}{\partial \mathbf{x}}
\end{equation}

\begin{equation}
\label{eqn: ab_dot}
    \dot{\mathbf{r}}^{B}(t) = -\mathbf{r}^{A}(t) -\mathbf{r}^{B}(t)^{T}    \frac{\partial f^{(a)}(\mathbf{x}, \mathbf{v}, t, \theta_{f})}{\partial \mathbf{v}}
\end{equation}

Differentiating Equation (\ref{eqn: ab_dot})
\begin{equation}
\label{eqn: ab_dot_dot}
    \ddot{\mathbf{r}}^{B}(t) = \mathbf{r}^{B}(t)^{T}\frac{\partial f^{(a)}(\mathbf{x}, \mathbf{v}, t, \theta_{f})}{\partial \mathbf{x}}
    -\frac{d}{dt}\left(
    \mathbf{r}^{B}(t)^{T}
    \frac{\partial f^{(a)}(\mathbf{x}, \mathbf{v}, t, \theta_{f})}{\partial \mathbf{v}}
    \right)
\end{equation}

This matches the ODE for the second order method in Equation (\ref{eqn: compact_ode_second_order_adjoint}). Now applying the initial conditions, using index notation again
\begin{equation}
r_{i}(t_{n}) = \frac{\partial L}{\partial z_{i}(t_{n})}
\end{equation}

For $i \leq d$
\begin{equation}
r_{i} = r^{A}_{i}(t_{n}) = \frac{\partial L}{\partial x_{i}(t_{n})}
\end{equation}

For $i \geq d+1$
\begin{equation}
r_{i}(t_{n}) = r^{B}_{(i-d)}(t_{n}) =\frac{\partial L}{\partial v_{(i-d)}(t_{n})}
\quad
\xrightarrow{}
\quad
r^{B}_{i}(t_{n}) = \frac{\partial L}{\partial v_{i}(t_{n})}
\end{equation}

Applying these initial conditions in $\mathbf{r}^{A}$ and $\mathbf{r}^{B}$ to Equation (\ref{eqn: ab_dot})
\begin{equation}
\dot{r}^{B}_{i}(t_{n}) = -\frac{\partial L}{\partial x_{i}(t_{n})} 
-\frac{\partial L}{\partial v_{j}(t_{n})}\frac{\partial f^{(a)}_{j}}{\partial v_{i}}\Biggr\vert_{t_{n}}
\end{equation}

By looking at the ODE and initial conditions, it is clear $\mathbf{r}^{B}$ is equivalent to the second order adjoint, in Equation (\ref{eqn: adjoint_ode_initial_conditions_second_order_final}). Now looking at the gradients, and including an explicit sum over the index

\begin{equation}
\frac{dL}{d\tilde{\theta}_{f}} = -\int_{t_{n}}^{t_{0}}
\sum_{i=1}^{2d}
r_{i}\frac{\partial \tilde{f}^{(v)}_{i}}{\partial \tilde{\theta}_{f}}dt
\quad
\xrightarrow[]{}
\quad
= -\int_{t_{n}}^{t_{0}}\sum_{i=1}^{d}
r^{A}_{i}\frac{\partial v_{i}}{\partial \tilde{\theta}_{f}}dt
-\int_{t_{n}}^{t_{0}}\sum_{i=d+1}^{2d}
r^{B}_{(i-d)}\frac{\partial f^{(a)}_{(i-d)}}{\partial \tilde{\theta}_{f}}dt
\end{equation}

The first term is zero because $v$ has no explicit $\theta$ dependence. The second term, after relabelling and using summation convention becomes
\begin{equation}
\frac{dL}{d\theta_{f}} = -\int_{t_{n}}^{t_{0}}
r^{B}_{i}(t)\frac{\partial f^{(a)}_{i}}{\partial \theta_{f}}dt
\qquad
= -\int_{t_{n}}^{t_{0}}
\mathbf{r}^{B}(t)^{T}\frac{\partial f^{(a)}}{\partial \theta_{f}}dt
\end{equation}

Where $\tilde{\theta}_{f} = \theta_{f}$ has been used, as they are both the parameters for the acceleration. This matches the result for gradients of parameters in the acceleration term $\theta_{f}$, when using the second order adjoint method, because $\mathbf{r}^{B}$ is the adjoint.

Looking at the gradients related to the initial conditions
\begin{equation}
\frac{dL}{d\tilde{\theta}_{s}} = \mathbf{r}(t_{0})^{T}\frac{d\tilde{s}(\mathbf{X}_{0}, \tilde{\theta}_{s})}{d\tilde{\theta}_{s}}
\end{equation}

After going through the previous process of separating out the sums from $1\xrightarrow{}d$ and $d+1 \xrightarrow{}2d$, then relabelling the indices on $\mathbf{r}^{B}$, this becomes
\begin{equation}
= r^{A}_{i}(t_{0})\frac{ds_{i}(\mathbf{X}_{0}, \theta_{s})}{d\tilde{\theta}_{s}}
+ r^{B}_{i}(t_{0})\frac{dg_{i}(s(\mathbf{X}_{0}, \theta_{s}), \theta_{g})}{d\tilde{\theta}_{s}}
\end{equation}

Using the expression for $\mathbf{r}^{A}$ by rearranging Equation (\ref{eqn: ab_dot}), this can be written as
\begin{equation}
\label{eqn: tilde_theta_s_grads}
\frac{dL}{d\tilde{\theta}_{s}}= 
\left(
-\dot{r}^{B}_{i}(t_{0}) - r^{B}_{j}(t_{0})\frac{\partial f^{(a)}_{j}}{\partial v_{i}}\Biggr\vert_{t_{0}}
\right)
\frac{ds_{i}}{d\tilde{\theta}_{s}}
+ r^{B}_{i}(t_{0})\frac{dg_{i}}{d\tilde{\theta}_{s}}
\end{equation}

The parameters $\tilde{\theta}_{s}$ contain both $\theta_{s}$ and $\theta_{g}$. Looking at $\theta_{g}$ first, where $s(\mathbf{X}_{0}, \theta_{s})$ has no dependence
\begin{equation}
    \frac{dL}{d\theta_{g}} = r^{B}_{i}(t_{0})\frac{\partial g_{i}(s(\mathbf{X}_{0}, \theta_{s}),\theta_{g})}{\partial \theta_{g}}
    =
    \mathbf{r}^{B}(t_{0})^{T}\frac{\partial g(s(\mathbf{X}_{0},\theta_{s}),\theta_{g})}{\partial \theta_{g}}
\end{equation}

where $\displaystyle  \frac{dg}{d\theta_{g}}$ can be written as a partial derivative, because $\mathbf{X}_{0}$ and $\theta_{s}$ have no dependence on $\theta_{g}$ at all. This expression is equivalent to $\displaystyle  \frac{dL}{d\theta_{g}}$ found using the second order adjoint method. Now looking at the parameters $\theta_{s}$, these parameters are in $s(\mathbf{X}_{0}, \theta_{s})$ explicitly and $g(s, \theta_{g})$, implicitly through $s$. Subbing $\tilde{\theta}_{s} = \theta_{s}$ into Equation (\ref{eqn: tilde_theta_s_grads}) gives

\begin{equation}
\frac{dL}{d\theta_{s}}=
\left(
-\dot{r}^{B}_{i}(t_{0}) - r^{B}_{j}(t_{0})\frac{\partial f^{(a)}_{j}(\mathbf{x},\mathbf{v}, t, \theta_{f})}{\partial v_{i}}\Biggr\vert_{t_{0}}
+ r^{B}_{j}(t_{0})\frac{\partial g_{j}(s(\mathbf{X}_{0}, \theta_{s}), \theta_{g})}
{\partial s_{i}}
 \right)
\frac{ds_{i}(\mathbf{X}_{0}, \theta_{s})}{d\theta_{s}}
\end{equation}

Using the fact that $\mathbf{x}(t_0) = s$, this is the same result for $\displaystyle  \frac{dL}{d\theta_{s}}$ found using the second order adjoint method: 
\begin{equation}
    \frac{dL}{d\theta_{s}}=\left(
    \mathbf{r}^{B}(t_{0})^{T}
    \frac{\partial g(\mathbf{x}(t_{0}), \theta_{g})}{\partial \mathbf{x}(t_{0})^{T}}
    -\dot{\mathbf{r}}^{B}(t_{0})^{T} 
    - \mathbf{r}^{B}(t_{0})^{T}
    \frac{\partial f^{(a)}(\mathbf{x},\mathbf{v},t,\theta_{f})}{\partial \mathbf{v}^{T}}\Biggr\vert_{t_{0}} 
    \right)
    \frac{ds(\mathbf{X}_{0}, \theta_{s})}{d\theta_{s}}
\end{equation}

All of the gradients match, so the first order adjoint method acting on $\mathbf{z}(t) = [\mathbf{x}(t), \mathbf{v}(t)]$ will produce the same gradients as the second order adjoint method acting on $\mathbf{x}(t)$. Given by Equation (\ref{eqn: gradients_from_second_order_method_final}).

Looking at the efficiencies of each method and how they would be implemented. Both methods would integrate the state $\mathbf{z} = [\mathbf{x}, \mathbf{v}]$ forward in time, with $\dot{\mathbf{z}} = [\mathbf{v}, f^{(a)}]$. Both methods then integrate $\mathbf{z}$ and the adjoint backwards, in the same way. The difference is how the adjoint is represented. In first order it is represented as $[\mathbf{r}^{A}, \mathbf{r}^{B}]$ where $\mathbf{r}^{B}$ is the adjoint, in second order it is represented as $[\mathbf{r}, \dot{\mathbf{r}}]$ where $\mathbf{r}$ is the adjoint.

The time derivatives and initial conditions for the first order adjoint representation are
\begin{equation}
\label{eq: first_order_adjoint_final}
\begin{aligned}
    & \frac{d}{dt}\mathbf{r}^{A}(t) = -\mathbf{r}^{B}(t)^{T}\frac{\partial f^{(a)}(\mathbf{x}, \mathbf{v}, t, \theta_{f})}{\partial \mathbf{x}}
    \\
    & \frac{d}{dt}\mathbf{r}^{B}(t) = -\mathbf{r}^{A}(t) - \mathbf{r}^{B}(t)^{T}\frac{\partial f^{(a)}(\mathbf{x}, \mathbf{v}, t, \theta_{f})}{\partial \mathbf{v}}
    \\
    &\mathbf{r}^{A}(t_{n}) = \frac{\partial L}{\partial \mathbf{x}(t_{n})}
    \\
    &\mathbf{r}^{B}(t_{n}) = \frac{\partial L}{\partial \mathbf{v}(t_{n})}
\end{aligned}
\end{equation}

The time derivatives and intial conditions for the second order adjoint representation are
\begin{equation}
\label{eq: second_order_adjoint_final}
\begin{aligned}
    & \frac{d}{dt}\mathbf{r}(t) = \dot{\mathbf{r}}(t)
    \\
    & \frac{d}{dt}\dot{\mathbf{r}}(t) = 
    \mathbf{r}(t)^{T}\frac{\partial f^{(a)}(\mathbf{x},\mathbf{v},t,\theta_{f})}{\partial \mathbf{x}}
    -\dot{\mathbf{r}}(t)^{T}\frac{\partial f^{(a)}(\mathbf{x},\mathbf{v},t,\theta_{f})}{\partial \mathbf{v}}
    - \mathbf{r}(t)^{T}\frac{d}{dt}\left(
    \frac{\partial f^{(a)}(\mathbf{x},\mathbf{v},t,\theta_{f})}{\partial \mathbf{v}}
    \right)
    \\
    &\mathbf{r}(t_{n}) = \frac{\partial L}{\partial \mathbf{v}(t_{n})}
    \\
    &\dot{\mathbf{r}}(t_{n}) = -\frac{\partial L}{\partial \mathbf{x}(t_{n})}
    -\frac{\partial L}{\partial \mathbf{v}(t_{n})^{T}}
    \frac{\partial f^{(a)}(\mathbf{x},\mathbf{v}, t, \theta_{f})}{\partial \mathbf{v}}\Biggr\vert_{t_{n}}
\end{aligned}
\end{equation}

Where
\begin{equation}
    \frac{d}{dt}\Biggr(\frac{\partial f^{(a)}}{\partial \mathbf{v}}\Biggr)=
    [\mathbf{v}^{T}, f^{(a)T}, 1]
    \begin{bmatrix}
    \partial_{\mathbf{x}}
    \\
    \partial_{\mathbf{v}}
    \\
    \partial_{t}
    \end{bmatrix}
    \Biggr(\frac{\partial f^{(a)}}{\partial \mathbf{v}}\Biggr)
\end{equation}

Looking at Equations (\ref{eq: first_order_adjoint_final}) and (\ref{eq: second_order_adjoint_final}), the second order method has the additional term, $\mathbf{r}\cdot d_{t}(\partial_{\mathbf{v}}(f^{(a)}))$, in the ODE, and the additional term, $(\partial_{\mathbf{v}}L)\cdot (\partial_{\mathbf{v}}f^{(a)})$ in the initial conditions. The first order method acting on the concatenated state, $[\mathbf{x}, \mathbf{v}]$, requires equal or fewer matrix multiplications than the second order method acting on $\mathbf{x}$, to find the gradients at each step and the initial conditions. This is in the general case, but also for all specific cases, it is as efficient or more efficient. The same is also true for calculating the final gradients. 
\end{proof}

The reason for the difference in efficiencies is the state, $\mathbf{r}^{B}$, is the adjoint, and the state, $\mathbf{r}^{A}$, contains a lot of the complex information about the adjoint. It is an entangled representation of the adjoint, contrasting with the disentangled second order representation $[\mathbf{r}, \dot{\mathbf{r}}]$. This is similar to how ANODEs can learn an entangled representation of second order ODEs and SONODEs learn the disentangled representation, seen in Section \ref{sec: anodes_interpretability}. However, entangled representations are more useful here, because they do not need to be interpretable, they just need to produce the gradients, and the entangled representation can do this more efficiently.

This analysis provides useful information on the inner workings of the adjoint method. It shows a second order specific method does exist, but the first order method acting on a state $\mathbf{z}=[\mathbf{x},\mathbf{v}]$ will produce the same gradients more efficiently, due to how it represents the complexity. This was specific to second order ODEs, however, the first order adjoint will work on any system of ODEs, because any motion can be thought of as being first order motion in phase space. Additionally, the first order method may be the most efficient adjoint method. The complexity going from the first order to the second order was seen based on the calculation, so this is only likely to get worse as the system of ODEs becomes more complicated.

\section{Second Order ODEs are not Homeomorphisms}
\label{app: sonode_not_homeomorphism}

One of the conditions for a transformation to be a homeomorphism is for the transformation to be bijective (one-to-one and onto). In real space, a transformation that evolves according to a second order ODE does not have to be one-to-one. This is demonstrated using a one-dimensional counter-example
\[
\ddot{x} = 0
\qquad
\xrightarrow[]{}
\qquad
x(t) = x_{0} + v_{0}t
\]
\[
x_{0} = \begin{bmatrix}
        [0] \\
        [1] \\
    \end{bmatrix}
,\qquad\qquad
v_{0} = -x_{0} + 2 = \begin{bmatrix}
        [2]\\
        [1]\\
\end{bmatrix}
\]

If $t_{0} = 0$ and $t_{N}=1$
\[
x(1) = \begin{bmatrix}
        [2] \\
        [2] \\
    \end{bmatrix}
\]

So the transformation in real space is not always one-to-one, and therefore, not always a homeomorphism. 

\section{ANODEs learning 2nd Order}
\label{app: anode_learn_second_order}

Here we present the proofs for the propositions from Section \ref{sec: anodes_learn_2nd_order}

\subsection{Functional Form Proofs}

\textbf{Proposition \ref{prop: anode_general_form}.}\textit{The general form ANODEs learn second order behaviour is given by:
\begin{equation}
\begin{bmatrix}
\dot{\mathbf{x}}\\
\dot{\mathbf{a}}\\
\end{bmatrix}
=
\begin{bmatrix}
F(\mathbf{x},\mathbf{a},t,\theta_{F})\\
G(\mathbf{x},\mathbf{a},t,\theta_{G})\\
\end{bmatrix},
\qquad
G = \left( \frac{\partial F}{\partial \mathbf{a}^{T}}\right)_{\text{left}}^{-1}
\left(
f^{(a)} - \frac{\partial F}{\partial \mathbf{x}^{T}}F - 
\frac{\partial F}{\partial t}
\right)
\end{equation}}

\begin{proof}
Let $\mathbf{z}(t)$ be the state vector $[\mathbf{x}(t), \mathbf{a}(t)]$. The time derivatives can be written as
\begin{equation}
\begin{bmatrix}
\dot{\mathbf{x}}(t)\\
\dot{\mathbf{a}}(t)\\
\end{bmatrix}
=
\begin{bmatrix}
F(\mathbf{x},\mathbf{a}, t, \theta_{F})\\
G(\mathbf{x},\mathbf{a}, t, \theta_{G})\\
\end{bmatrix}
\end{equation}
Let $\mathbf{x}(t)$ follow the second order ODE, $\ddot{\mathbf{x}} = \dot{F} = f^{(a)}(\mathbf{x}, \dot{\mathbf{x}}, t, \theta_{f})$. Differentiating $F$ with respect to time
\begin{equation}
\dot{F} = \frac{\partial F}{\partial \mathbf{x}^{T}}\dot{\mathbf{x}} + \frac{\partial F}{\partial \mathbf{a}^{T}}\dot{\mathbf{a}} + \frac{\partial F}{\partial t}
=
f^{(a)}(\mathbf{x}, \dot{\mathbf{x}}, t, \theta_{f})
\end{equation}
Using $\dot{\mathbf{x}} = F$ and $\dot{\mathbf{a}} = G$
\begin{equation}
f^{(a)}(\mathbf{x}, F, t, \theta_{f})
=
\frac{\partial F}{\partial \mathbf{x}^{T}}F + \frac{\partial F}{\partial \mathbf{a}^{T}}G + \frac{\partial F}{\partial t}
\end{equation}

Rearranging for G
\begin{equation}
G(\mathbf{x},\mathbf{a}, t, \theta_{G}) = \left( \frac{\partial F}{\partial \mathbf{a}^{T}}\right)^{-1}_{\text{left}}
\left(
f^{(a)}(\mathbf{x}, F, t, \theta_{f}) - \frac{\partial F}{\partial \mathbf{x}^{T}}F - 
\frac{\partial F}{\partial t}
\right)
\end{equation}
\end{proof}

In order for the solution of $G$ to exist, the matrix $ \displaystyle \frac{\partial F}{\partial \mathbf{a}^{T}}$ must be invertible. Either the dimension of $\mathbf{a}$ matches $F$, $\mathbf{x}$ and $f^{(a)}$, so that $\displaystyle  \frac{\partial F}{\partial \mathbf{a}^{T}}$ is square, or $ \displaystyle \frac{\partial F}{\partial \mathbf{a}^{T}}$ has a left inverse. Crucially, $F$ must have explicit $\mathbf{a}$ dependence, or the inverse does not exist. Intuitively, in order for real space to couple to augmented space, there must be explicit dependence.

Using the equation for $G(\mathbf{x}, \mathbf{a}, t, \theta_{G})$, there is a gauge symmetry in the system, which proves proposition \ref{prop: anode_infinity}.

\textbf{Proposition \ref{prop: anode_infinity}.} \textit{ANODEs can learn an infinity of (non-trivial) functional forms to learn the true dynamics of a second order ODE in real space.}

\begin{proof}
Assume a solution for $F(\mathbf{x}, \mathbf{a}, t, \theta_{F})$ and $G(\mathbf{x},\mathbf{a}, t, \theta_{G})$ has been found such that, $\dot{F} = f^{(a)}$ and $F(\mathbf{x}_{0}, \mathbf{a}_{0}, t_{0}, \theta_{F}) = \dot{\mathbf{x}}_{0}$. If an arbitrary function of $\mathbf{x}$, $\phi(\mathbf{x})$, is added to $F$, where $\phi(\mathbf{x}_{0}) = 0$
\begin{equation}
    \tilde{F}(\mathbf{x}, \mathbf{a}, t, \theta_{F}) = F(\mathbf{x}, \mathbf{a}, t, \theta_{F}) + \phi(\mathbf{x})
\end{equation}
The initial velocity is still the same. The dynamics are preserved if there is a corresponding change in $G$
\begin{equation}
    \tilde{G}(\mathbf{x},\mathbf{a}, t, \theta_{G}) = \left( \frac{\partial (F+\phi)}{\partial \mathbf{a}^{T}}\right)^{-1}
\left(
f^{(a)}(\mathbf{x}, F+\phi, t, \theta_{f}) - \frac{\partial (F+\phi)}{\partial \mathbf{x}^{T}}(F+\phi) - 
\frac{\partial (F+\phi)}{\partial t}
\right)
\end{equation}
The proof can end here, however this can be simplified. $\phi(\mathbf{x})$ has no explicit $\mathbf{a}$ or $t$ dependence, so this equation simplifies to
\begin{equation}
    \tilde{G} = \left( \frac{\partial F}{\partial \mathbf{a}^{T}}\right)^{-1}
\left(
f^{(a)}(\mathbf{x}, F+\phi, t, \theta_{f}) - \frac{\partial F}{\partial \mathbf{x}^{T}}F - 
\frac{\partial F}{\partial t}
-
\frac{\partial F}{\partial \mathbf{x}^{T}}\phi -
\frac{\partial \phi}{\partial \mathbf{x}^{T}}F -
\frac{\partial \phi}{\partial \mathbf{x}^{T}}\phi
\right)
\end{equation}
The term $f^{(a)}(\mathbf{x}, F+\phi, t, \theta_{f})$ can be Taylor expanded (assuming convergence)
\begin{equation}
    f^{(a)}(\mathbf{x}, F+\phi, t, \theta_{f}) =
    f^{(a)}(\mathbf{x}, F, t, \theta_{f}) +
    \sum_{n=1}^{\infty}
    \left(
    \frac{\partial ^{n}f^{(a)}(\mathbf{x}, \dot{\mathbf{x}}, t, \theta_{f})}{\partial \dot{\mathbf{x}}^{Tn}}\Biggr\vert_{\dot{\mathbf{x}}=F}\frac{\phi^{n}}{n!}
    \right)
\end{equation}

Which gives the corresponding change in $G$
\begin{equation}
\label{eqn: g_gauge_change}
    \tilde{G} = G(\mathbf{x},\mathbf{a}, t, \theta_{G}) +
    \left(
    \frac{\partial F}{\partial \mathbf{a}^{T}}
    \right)^{-1}
    \left(
    \sum_{n=1}^{\infty}
    \left(
    \frac{\partial ^{n}f^{(a)}}{\partial \dot{\mathbf{x}}^{Tn}}\Biggr\vert_{\dot{\mathbf{x}}=F}\frac{\phi^{n}}{n!}
    \right)
    -\frac{\partial F}{\partial \mathbf{x}^{T}}\phi
    -\frac{\partial \phi}{\partial \mathbf{x}^{T}}F
    -\frac{\partial \phi}{\partial \mathbf{x}^{T}}\phi
    \right)
\end{equation}
\end{proof}

This demonstrates that there are infinite functional forms that ANODEs can learn. This only considered perturbing functions $\phi(\mathbf{x})$. More complex functions can be added that have $\mathbf{a}$ or $t$ dependence, which lead to a more complex change in $G$. By contrast, we now show SONODEs have a unique functional form.

\textbf{Proposition \ref{prop: sonode_unique}.} \textit{SONODEs learn to approximate a unique functional form to learn the true dynamics of a second order ODE in real space.}

\begin{proof}
Consider a dynamical system
\begin{equation}
    \frac{d^{2}\mathbf{x}}{dt^{2}} = f(\mathbf{x}, \mathbf{v}, t),
    \qquad\qquad
    \mathbf{x}(t_{0}) = \mathbf{x}_{0}, 
    \qquad\qquad
    \mathbf{v}(t_{0}) = \mathbf{v}_{0}
\end{equation}

For these problems we let the loss only depend on the position, if it depends on position and velocity there would be more restrictions. So if it is true when loss only depends on the position, it is also true when it depends on both position and velocity.

Assume that there is another system, that has the same position as a function of time
\begin{equation}
    \frac{d^{2}\tilde{\mathbf{x}}}{dt^{2}} = \tilde{f}(\tilde{\mathbf{x}}, \tilde{\mathbf{v}}, t),
    \qquad\qquad
    \tilde{\mathbf{x}}(t_{0}) = \tilde{\mathbf{x}}_{0}, 
    \qquad\qquad
    \tilde{\mathbf{x}}(t_{0}) = \tilde{\mathbf{v}}_{0}
\end{equation}

Where $f(\mathbf{x}, \mathbf{v}, t)\neq\tilde{f}(\tilde{\mathbf{x}}, \tilde{\mathbf{v}}, t)$. Because the initial conditions are given the position and velocity are defined at all times, and therefore position, velocity and acceleration can all be written as explicit functions of time. $\mathbf{x} \equiv \mathbf{x}(t)$, $\mathbf{v} \equiv \mathbf{v}(t)$. This allows for the acceleration to be written as a function of $t$ only, $f(\mathbf{x}, \mathbf{v}, t) = f_{\tau}(t)$ for all $t$. The same applies for the second system, $\tilde{\mathbf{x}} \equiv \tilde{\mathbf{x}}(t)$, $\tilde{\mathbf{v}} \equiv \tilde{\mathbf{v}}(t)$ and $\tilde{f}(\tilde{\mathbf{x}}, \tilde{\mathbf{v}}, t) = \tilde{f}_{\tau}(t)$

For all $t$, $\mathbf{x}(t) = \tilde{\mathbf{x}}(t)$, therefore, for any time increment, $\delta t$, $\mathbf{x}(t+\delta t) = \tilde{\mathbf{x}}(t+ \delta t)$. Taking the full time derivative of $\mathbf{x}$ and $\tilde{\mathbf{x}}(t)$
\begin{equation}
    \frac{d\mathbf{x}(t)}{dt} = \mathbf{v}(t) = \lim_{\delta t\to 0}
    \frac{\mathbf{x}(t+\delta t) - \mathbf{x}(t)}{\delta t}
\end{equation}
\begin{equation}
    \frac{d\tilde{\mathbf{x}}(t)}{dt} = \tilde{\mathbf{v}}(t) = \lim_{\delta t\to 0}
    \frac{\tilde{\mathbf{x}}(t+\delta t) - \tilde{\mathbf{x}}(t)}{\delta t}
\end{equation}

Using these two equations and the fact that $\mathbf{x}(t) = \tilde{\mathbf{x}}(t)$, it is inferred that $\mathbf{v}(t) = \tilde{\mathbf{v}}(t)$ for all $t$. Taking the full time derivative of $\mathbf{v}(t)$ and $\tilde{\mathbf{v}}(t)$
\begin{equation}
    \frac{d\mathbf{v}(t)}{dt} = f_{\tau}(t) = \lim_{\delta t\to 0}
    \frac{\mathbf{v}(t+\delta t) - \mathbf{v}(t)}{\delta t}
\end{equation}
\begin{equation}
    \frac{d\tilde{\mathbf{v}}(t)}{dt} = \tilde{f}_{\tau}(t) = \lim_{\delta t\to 0}
    \frac{\tilde{\mathbf{v}}(t+\delta t) - \tilde{\mathbf{v}}(t)}{\delta t}
\end{equation}

Using these two equation and the fact that $\mathbf{v}(t) = \tilde{\mathbf{v}}(t)$ for all $t$, it is also inferred that $f_{\tau}(t) = \tilde{f}_{\tau}(t)$ for all $t$.

Using these three facts, $\mathbf{x}(t) = \tilde{\mathbf{x}}(t)$, $\mathbf{v}(t) = \tilde{\mathbf{v}}(t)$ and $f_{\tau}(t) = \tilde{f}_{\tau}(t)$. It must also be true that $f(\mathbf{x}(t), \mathbf{v}(t), t) = \tilde{f}(\tilde{\mathbf{x}}(t), \tilde{\mathbf{v}}(t), t) \xrightarrow{} f(\mathbf{x}, \mathbf{v}, t) = \tilde{f}(\mathbf{x}, \mathbf{v}, t)$. Therefore the assumption that $f(\mathbf{x}, \mathbf{v}, t)\neq\tilde{f}(\tilde{\mathbf{x}}, \tilde{\mathbf{v}}, t)$ is incorrect, there can only be one functional form for $f(\mathbf{x}, \mathbf{v}, t)$.

Additionally, using $\mathbf{v}(t) = \tilde{\mathbf{v}}(t)$ for all $t$, the initial velocities must also be the same.

\end{proof}

\subsection{ANODEs Learning Two Functions}

In Section \ref{sec: how_anodes_learn_2nd}, it was shown that ANODEs were able to learn two functions at the same time
\begin{equation}
    x_{1}(t) = e^{-\gamma t}\sin(\omega t)
    ,\qquad\qquad
    x_{2}(t) = e^{-\gamma t}\cos(\omega t)
\end{equation}
using the solution
\begin{equation}
\begin{bmatrix}
\dot{x}\\
\dot{a}\\
\end{bmatrix}
=
\begin{bmatrix}
Ca -\omega x - \gamma x + \omega\\
\omega a - \gamma a  - \frac{1}{C}(2\omega^{2} x+ \gamma\omega -\omega^{2})\\
\end{bmatrix},
\end{equation}

This is a specific case of the general formulation given by Equation (\ref{eqn: anode_learn_2nd_order}). When the problem is generalised to have mixed amounts of sine and cosine in each function
\begin{equation}
x_{1}(t) = e^{-\gamma t}(A_{1}\sin(\omega t) + B_{1}\cos(\omega t)),
\qquad
\qquad
x_{2}(t) = e^{-\gamma t}(A_{2}\sin(\omega t) + B_{2}\cos(\omega t))
\end{equation}

ANODEs are still able to learn these functions, shown in the first plot of Figure~\ref{fig: anode_3_funcs}. As shown previously, if 
$F(\mathbf{x}, \mathbf{a}, t, \theta_{F})$ gets the addition, $\alpha x + \beta$, then the ODE is preserved if $G(\mathbf{x}, \mathbf{a}, t, \theta_{G})$ also gets the addition $\displaystyle  \frac{-1}{C}((\alpha - \omega + \gamma)(\alpha x + \beta)+ \alpha(C a - \omega x - \gamma x + \omega))$, given by Equation (\ref{eqn: g_gauge_change}). This gauge change preserves the ODE, but gives a new expression for the initial velocity
\begin{equation}
\dot{x}(0) = -\omega x(0) -\gamma x(0) + \omega + \alpha x(0) + \beta = \tilde{\alpha}x(0) + \tilde{\beta}
\end{equation}

which can be written in matrix-vector notation as
\begin{equation}
\label{eqn: matrix-vector}
    \begin{bmatrix}
    x_{1}(0) & 1 \\
    x_{2}(0) & 1 \\
    \end{bmatrix}
    \begin{bmatrix}
    \tilde{\alpha} \\
    \tilde{\beta} \\
    \end{bmatrix}
    =
    \begin{bmatrix}
    \dot{x}_{1}(0) \\
    \dot{x}_{2}(0) \\
    \end{bmatrix}
\end{equation}
There are two equations and two unknowns, $\tilde{\alpha}$ and $\tilde{\beta}$, so this is possible to solve, and for ANODEs to learn.\footnote{There are trivial cases where this would be impossible. For example if the two functions were $\pm sin(\omega t)$, they would have the same initial position, but different initial velocities. Corresponding to the matrix in Equation (\ref{eqn: matrix-vector}) having zero determinant.} To test this even further we added a third function to be learnt. ANODEs were able to do this, shown in the second plot of Figure~\ref{fig: anode_3_funcs}.\footnote{The figure also shows that when trajectories cross in real space they do not in augmented space, and when they cross in augmented space they do not in real space, supporting Proposition \ref{prop: no_crossing_different}.} 

\begin{figure}[h]
    \centering
    \includegraphics[width=\textwidth]{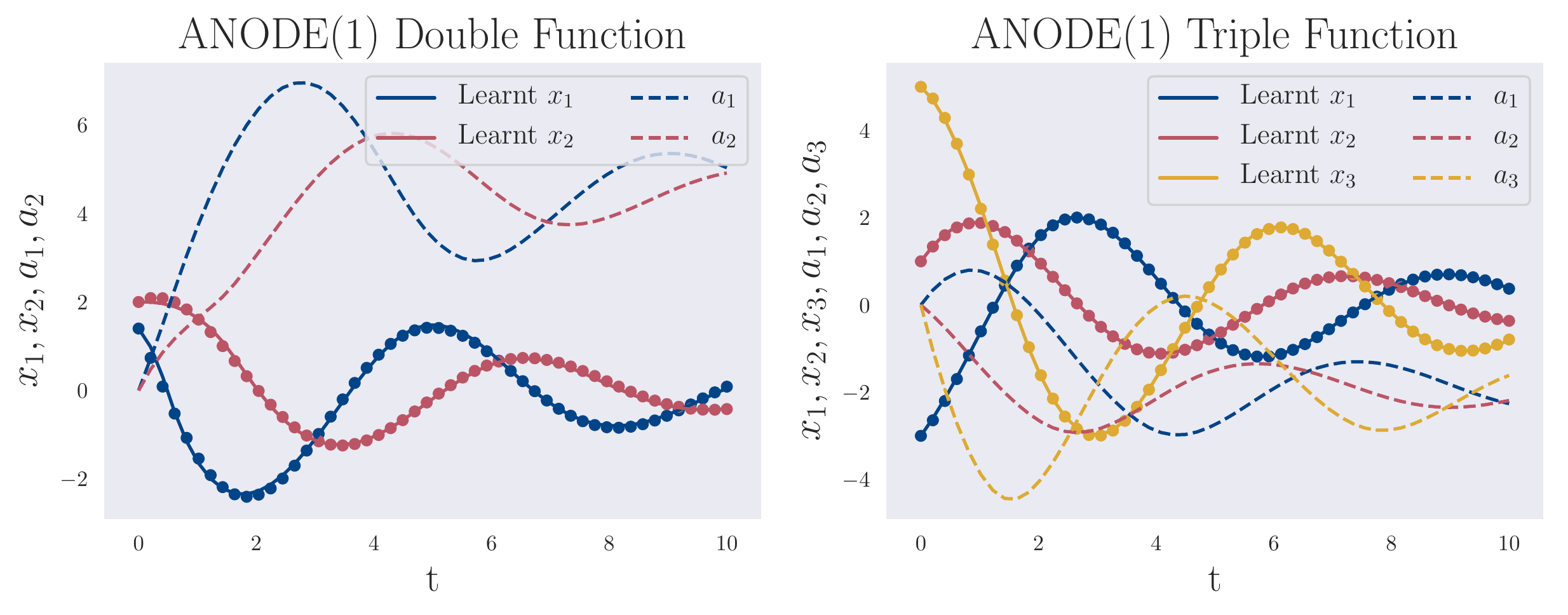}
    \caption{ANODE(1) learning two functions and three functions, with a shared ODE, but different initial conditions. The real trajectories are seen going through their sampled data points, and the corresponding augmented trajectories are also plotted. ANODE(1) is able to learn the trajectories.}
    \label{fig: anode_3_funcs}
\end{figure}

\section{Experimental Setup and Additional Results}
\label{app: experimental_setup}

We anticipate two main uses for SONODEs. One is using an experiment in a controlled environment, where the aim is to find values such as the coefficient of friction. The other use is when data is observed, and the aim is to extrapolate in time, but the experiment is not controlled, for example, observing weather. We would expect for the former, a simple model with only a single linear layer would be useful, to find those coefficients, and for the latter, a deeper model may be more appropriate. Additionally, Neural ODEs may be used in classification or other tasks that only involve the start and endpoints of the flow. For all of these tasks we used $t_{0}=0$ and $t_{1}=1$, and accelerations that were not time-dependent. For tasks depending on the start and endpoint only, a deeper neural network is more useful for the acceleration.

For all experiments, except the MNIST experiment, we optimise using Adam with a learning rate of 0.01. We also train on the complete datasets and do not minibatch. All the experiments were repeated 3 times to obtain a mean and standard deviation. Depending on the task at hand, we used two different architectures for NODEs, ANODEs and SONODEs. The first is a simple linear model, one weight matrix and bias without activations. This architecture, in the case of NODEs, ANODEs and SONODEs, was used on Silverbox, Airplane Vibrations and Van-Der-Pol Oscillator, with the aim of extracting coefficients from the models, for these tasks we also allowed ANODEs to learn the initial augmented position. The second architecture is a fully connected network with two hidden layers of size 20, it uses ELU activations in $\dot{\mathbf{z}}$ and tanh activations in the initial conditions. ELU and tanh were used because they allow for negative values in the ODE \cite{massaroli2020dissecting}.  

When considering ANODEs, they are in a higher-dimensional space than the problem, and the result must be projected down to the lower dimensions. This projection was not learnt as a linear layer, instead, the components were directly selected, using an identity for the real dimensions, and zero for the augmented dimensions. This was done because a final (or initial) learnt linear layer would hide the advantages of certain models. For example, the parity problem can be solved easily if NODEs are given a final linear layer, do not move the points and then multiply by -1. For this reason, no models used a linear layer at the end of the flow. Equally, they do not initialise with a linear layer as they again hide advantages. For example, the nested n-spheres problem, NODEs can solve this with an initial linear layer, if they were to go into a higher-dimensional space the points may already be linearly separated, as shown by \citet{massaroli2020dissecting}.

\subsection{Van Der Pol Oscillator}

ANODEs and SONODEs were tested on a forced Van Der Pol (VDP) Oscillator that exhibits a chaotic behaviour. More specfically, the parameters and equations of the particular VDP oscillator are:
\begin{equation}
    \ddot{x} = 8.53(1-x^{2})\dot{x} -x + 1.2\cos(0.2\pi t),
    \qquad\qquad
    x_{0} = 0.1,
    \qquad\qquad
    \dot{x}_{0} = 0
\end{equation}

\begin{figure}[ht]
    \centering
    \vspace{-7pt}
    \includegraphics[width=\textwidth]{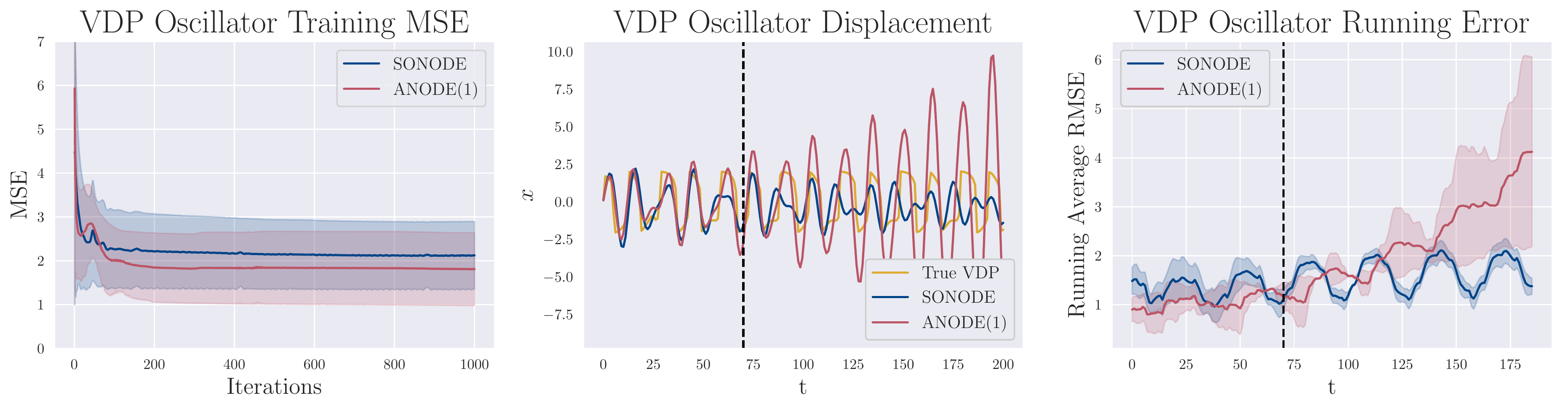}
    \vspace{-10pt}
    \caption{ANODE(1) and SONODE learning a Van-Der-Pol Oscillator. ANODEs are able to converge to a lower training loss, however they diverge when extrapolating. The models were trained on the first 70 points and extrapolated to 200.}
    \label{fig: vdp}
\end{figure}

As shown in Figure \ref{fig: vdp}, while ANODEs achieve a lower training loss than SONODEs, their test loss is much greater. We conjecture that, in the case of ANODEs, this is a case of overfitting. SONODEs, on the other hand, can better approximate the dynamics, therefore they exhibit better predictive performance. Note that, neither model can learn the VDP oscillator particularly well, which may be attributed to chaotic behaviour of the system at hand.

\subsection{Third Order NODEs on Airplane Vibrations}
\label{app: plane_part_2}
We test Third Order Neural ODEs (TONODEs) on the Airplane Vibrations task from section \ref{sec: planes}. The results are in Figure \ref{fig: plane_part_2}.

\begin{figure}[t]
    \centering
    \includegraphics[width=\textwidth]{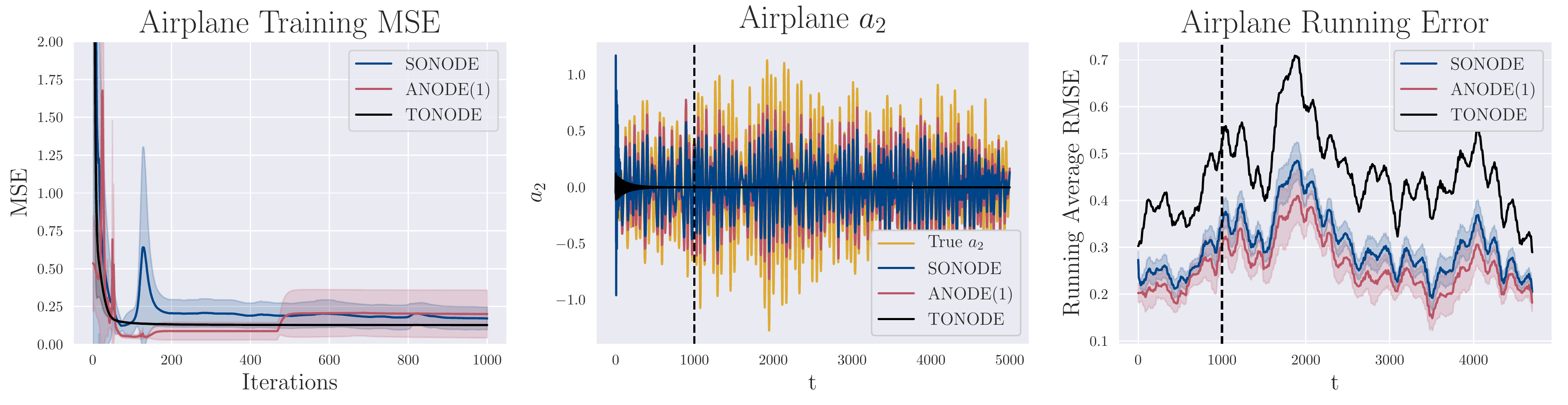}
    \caption{Repeating the Airplane Vibrations task with third order NODEs (TONODEs). We see that, in this case, TONODEs are not as successful at modelling these dynamics as SONODEs and ANODEs, having a larger error both on the training data and the extrapolation.}
    \label{fig: plane_part_2}
\end{figure}

We see that TONODEs vastly underperform compared to ANODEs and SONODEs. In each of the 3 repetitions of the experiment, the different initialisation found the best solution to be at zero. Therefore, whilst the loss stays constant, the error remains large. We hypothesise that despite theoretically being able to perform at least as well as SONODEs, TONODEs avoid exponentially growing at any point by exponentially decaying towards zero. It is likely that by rescaling the time to be between 0 and 1, TONODE would approach a more accurate solution.

\subsection{First Order Dynamics and Interpolation}

SONODEs contain a subset of models that is NODEs. Consider first order dynamics that is approximated by the NODE
\begin{equation}
\label{eq: node_interp}
    \dot{\mathbf{x}} = f^{(v)}(\mathbf{x}, t, \tilde{\theta}_{f})
\end{equation}
Carrying out the full time derivative of Equation (\ref{eq: node_interp}):
\begin{equation}
    \ddot{\mathbf{x}} = \frac{\partial f^{(v)}(\mathbf{x}, t, \tilde{\theta}_{f})}{\partial \mathbf{x}^{T}}\dot{\mathbf{x}}
    +
    \frac{\partial f^{(v)}(\mathbf{x}, t, \tilde{\theta}_{f})}{\partial t}
    ,\qquad
    \dot{\mathbf{x}}(t_{0}) = f^{(v)}(\mathbf{x}(t_{0}), t_{0}, \tilde{\theta}_{f})
\end{equation}

Which yields the SONODE equivalent of the learnt dynamics:
\begin{equation}
    f^{(a)}(\mathbf{x}, \mathbf{v}, t, \theta_{f}) = \frac{\partial f^{(v)}(\mathbf{x}, t, \tilde{\theta}_{f})}{\partial \mathbf{x}^{T}}\mathbf{v}
    +
    \frac{\partial f^{(v)}(\mathbf{x}, t, \tilde{\theta}_{f})}{\partial t}
    ,\qquad
    g(\mathbf{x}(t_{0}), \theta_{g}) = f^{(v)}(\mathbf{x}(t_{0}), t_{0}, \tilde{\theta}_{f})
\end{equation}

Additionally, it was shown in Equation (\ref{eq:sonode_coupled}) that SONODEs are a specific case of ANODEs that learn the initial augmented position. Therefore, anything that NODEs can learn, SONODEs should also be able to learn, and anything SONODEs can learn, ANODEs should be able to learn. To demonstrate that SONODEs and ANODEs can also learn first order dynamics, we task them with learning an exponential with no noise, $x(t) = exp(0.1667t)$. All models, as expected, are able to learn the function, as shown in Figure \ref{fig: interp}.

\begin{figure}[h]
    \centering
    \includegraphics[width=\textwidth]{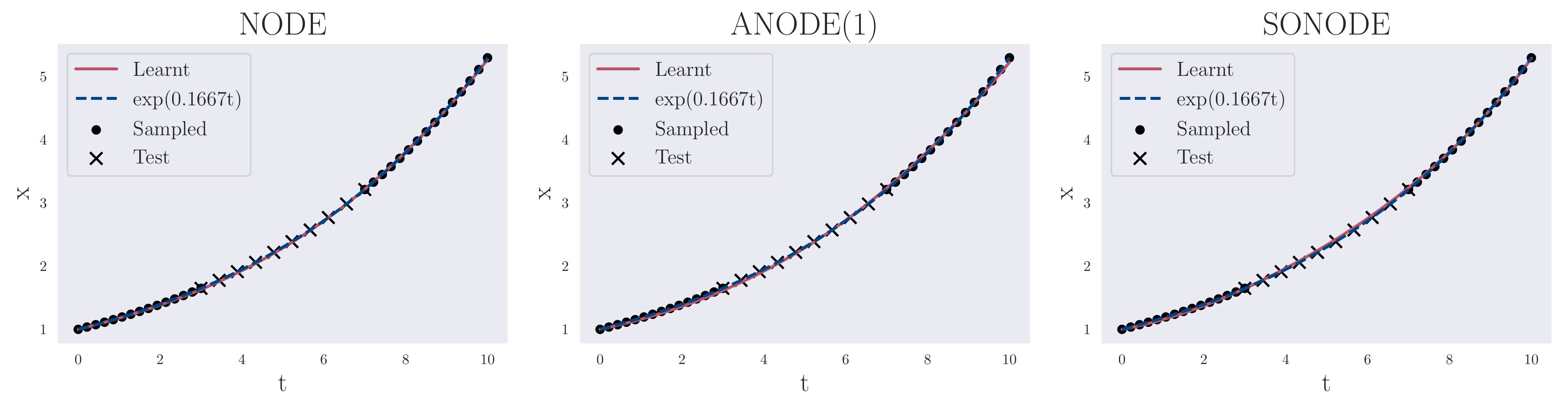}
    \caption{The different models learning an exponential, simple first order dynamics, and interpolating between two observation sections. As expected, all models are able to learn the function.}
    \label{fig: interp}
\end{figure}

\subsection{Performance on MNIST}

\label{app: mnist}
NODEs, SONODEs and ANODEs were tested on MNIST \cite{lecun1998gradient} to investigate their ability on classification tasks. The networks used convolutional layers, which in the case of SONODEs were used for both the acceleration and the initial velocity. ANODEs were augmented with one additional channel as is suggested by \citet{dupont2019augmented}. The models used a training batch size of 128 and test batch size of 1000, as well as group normalisation. SGD optimiser was used with a learning rate of 0.1 and momentum 0.9. The cross-entropy loss was used. The experiment was repeated 3 times with random initialisations to obtain a mean and standard deviation. The results are given in table \ref{tab: mnist} and Figure~\ref{fig: mnist}.

\begin{table}[h!]
    \centering
    \caption{Results for the MNIST experiments at convergence. SONODE converges to a higher test accuracy than NODEs with a lower NFE. ANODEs converge to the same higher test accuracy with a higher NFE, but with a lower parameter count than SONODEs.}
    \label{tab: mnist}
    \begin{tabular}{l cc}
    \toprule
        Model &  Test Accuracy & NFE\\
    \midrule
    NODE  & 0.9961 $\pm$ 0.0004 & 26.2 $\pm$ 0.0\\
    SONODE  &  \textbf{0.9963 $\pm$ 0.0001} & \textbf{20.1 $\pm$ 0.0}\\
    ANODE & \textbf{0.9963 $\pm$ 0.0001} & 32.2 $\pm$ 0.0 \\ 
    \bottomrule
    \end{tabular}
\end{table}

In terms of test accuracy, SONODEs and ANODEs perform marginally better than NODEs. ANODEs can achieve the same accuracy with fewer parameters than SONODEs because the dynamics are not limited to second order and it is only the final state that is of concern in classification. However, SONODEs are able to achieve the same accuracy with a lower number of function evaluations (NFE). NFE denotes how many function evaluations are made by the ODE solver, and represents the complexity of the learnt solution. It is a continuous analogue of the depth of a discrete layered network. In the case of NODEs and ANODEs, the NFE gradually increases meaning that the complexity of the flow also increases. However, in the case of SONODEs, the NFE stays constant, suggesting that the initial velocity was associated with larger gradients (otherwise we would expect NFE to increase for SONODEs with training).

\begin{figure}[h]
    \centering
    \includegraphics[width=\textwidth]{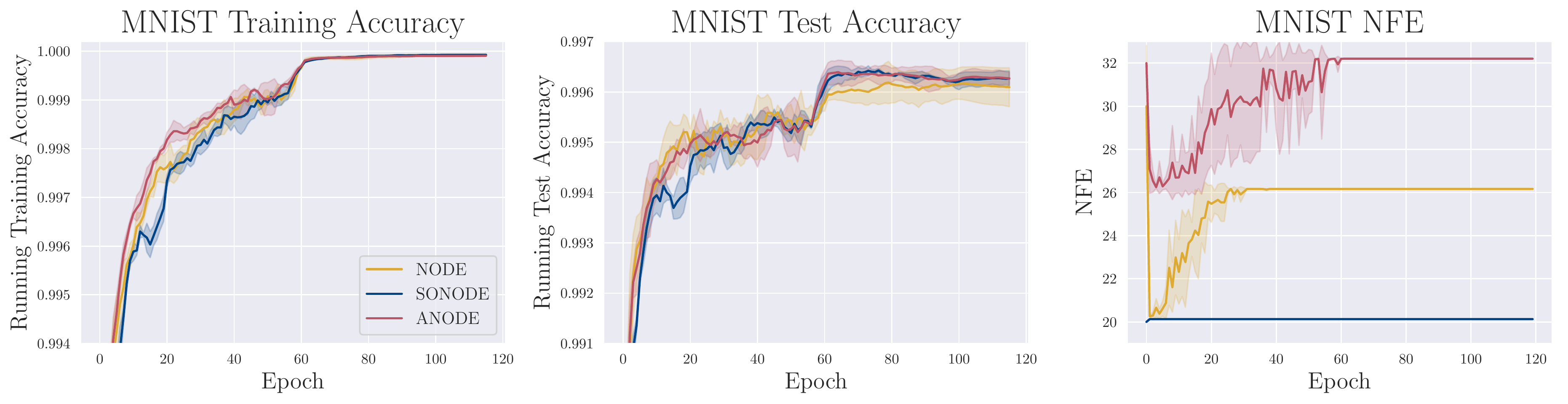}
    \caption{Comparing the performance of SONODEs and NODEs on the MNIST dataset. SONODEs converge to the same training accuracy and a higher test accuracy with a lower NFE than NODEs. NODEs had 208266 parameters, SONODEs had 283658 and ANODEs had 210626. Additional parameters were associated with the initial velocity, or the augmented channel.}
    \label{fig: mnist}
\end{figure}

\end{document}